\documentclass[10pt]{article} 
\usepackage[preprint]{tmlr}


\usepackage{hyperref}
\usepackage{url}


\usepackage{amsthm}
\usepackage{amsmath}
\usepackage{amssymb}
\usepackage{mathrsfs}
\usepackage{amsfonts}
\usepackage{mathtools}
\usepackage{bm}
\usepackage{wrapfig}

\usepackage{fancyhdr}
\usepackage{xcolor}
\usepackage{algorithm}
\usepackage{algorithmic}

\usepackage{caption} 
\captionsetup[table]{skip=11pt}

\usepackage[utf8]{inputenc} 
\usepackage[T1]{fontenc}    
\usepackage{hyperref}       
\usepackage{url}            
\usepackage{booktabs}       
\usepackage{amsthm}
\usepackage{amsfonts}       
\usepackage{nicefrac}       
\usepackage{microtype}      
\usepackage{xcolor}         
\usepackage{tikz}
\usepackage{forest}
\usetikzlibrary{automata, positioning}
\usetikzlibrary{matrix,intersections,arrows,decorations.pathmorphing,backgrounds,positioning,fit,petri}
\usetikzlibrary{decorations.pathmorphing} 
\usetikzlibrary{fit}					
\usetikzlibrary{backgrounds}	
\usepackage{float}
\usepackage{bbm}

\tikzset{%
  zeroarrow/.style = {-stealth,dashed},
  onearrow/.style = {-stealth,solid},
  c/.style = {circle,draw,solid,minimum width=2em,
        minimum height=2em},
  r/.style = {rectangle,draw,solid,minimum width=2em,
        minimum height=2em}
}

\newcommand {\bigO}{\mathcal{O}}
\newcommand {\St}{\mathcal{S}}

\newcommand {\A}{\mathcal{A}}

\newcommand {\Dist}{\mathcal{D}}
\newcommand {\Reward}{\mathcal{R}}

\newcommand {\R}{\mathbb{R}}
\newcommand {\E}{\mathbb{E}}

\newcommand {\Mech}{\mathcal{M}}

\newcommand {\Pa}{\mathbb{P}}

\newcommand{\Ind}{\mathbb{I}}

\newcommand{\abs}[1]{\left\lvert #1 \right\rvert}
\newcommand{\norm}[1]{\left\lVert#1\right\rVert}


\newcommand{\sS}{\mathsf{S}}
\newcommand{\sE}{\mathsf{E}}
\newcommand{\sI}{\mathsf{I}}
\newcommand{\sR}{\mathsf{R}}

\definecolor{lightyellow}{RGB}{255,242,204}
\definecolor{lightred}{RGB}{248,203,173}
\definecolor{lightblue}{RGB}{157, 195,230}
\definecolor{lightgreen}{RGB}{169,209,142}
\definecolor{lightlightgreen}{RGB}{197,224,180}

\newtheorem{theorem}{Theorem}
\theoremstyle{definition}
\newtheorem{defn}{Definition}

\newtheorem{example}{Example}
\newtheorem{assumption}{Assumption}
\newtheorem{corollary}{Corollary}

\newtheorem{lemma}{Lemma}

\newtheorem*{T1}{Theorem \ref{thm:Q_approx_error}}
\newtheorem*{T2}{Lemma \ref{thm:PF_equiv}}
\newtheorem*{T3}{Theorem \ref{thm:proj_laplace_util}}

\title{Privacy Preserving Reinforcement Learning for Population Processes}

\author{\name Samuel Yang-Zhao \email samuel.yang-zhao@anu.edu.au \\
      \addr Australian National University
      \AUAND
      \name Kee Siong Ng \email keesiong.ng@anu.edu.au \\
      \addr Australian National University
      }


\begin{document}

\maketitle

\begin{abstract}
    We consider the problem of privacy protection in Reinforcement Learning (RL) algorithms that operate over population processes, a practical but understudied setting that includes, for example, the control of epidemics in large populations of dynamically interacting individuals. In this setting, the RL algorithm interacts with the population over $T$ time steps by receiving population-level statistics as state and performing actions which can affect the entire population at each time step. An individual's data can be collected across multiple interactions and their privacy must be protected at all times. We clarify the Bayesian semantics of Differential Privacy (DP) in the presence of correlated data in population processes through a Pufferfish Privacy analysis. We then give a meta algorithm that can take any RL algorithm as input and make it differentially private. This is achieved by taking an approach that uses DP mechanisms to privatize the state and reward signal at each time step before the RL algorithm receives them as input. Our main theoretical result shows that the value-function approximation error when applying standard RL algorithms directly to the privatized states shrinks quickly as the population size and privacy budget increase. This highlights that reasonable privacy-utility trade-offs are possible for differentially private RL algorithms in population processes. Our theoretical findings are validated by experiments performed on a simulated epidemic control problem over large population sizes.
\end{abstract}


\section{Introduction}

The increasing adoption of Reinforcement Learning (RL) algorithms in many practical applications such as digital marketing, finance, and public health \citep{mao2020real, wang2021robo,charpentier2020covid} have led to new, challenging privacy considerations for the research community.
This is a particularly important issue in domains like healthcare where highly sensitive personal information is routinely collected and the use of such data in training RL algorithms must be handled carefully, in light of successful privacy attacks on RL algorithms \citep{PWZL19}.
Privacy Preserving Reinforcement Learning is an active research area looking to address these concerns, mostly 
via the now widely accepted concept of Differential Privacy (DP) \citep{dwork2006Laplace}, which confers formal `plausible deniability' guarantees for users whose data are used in training RL algorithms, thus offering them privacy protection.

In this paper, we consider the setting where an RL agent interacts with a population of individuals and investigate how each individual's differential privacy can be protected. 
We assume interactions between individuals are not visible to the RL agent but the agent receives population-level statistics at every time step and can perform actions that affect the entire population. This class of environments, which we call \textit{population processes}, models settings such as the control of epidemic spread by government interventions \citep{KCJB20}. 
As an individual's data is collected across multiple interactions, 
our goal is to ensure that an individual's data contributions over all interactions are differentially private. 
To the best of our knowledge, existing approaches to privacy-preserving reinforcement learning, which we survey next in \S~\ref{subsec:related work}, 
do not cater to this natural and important problem setting and are unable to exploit the structure of the problem.

\subsection{Related Work}\label{subsec:related work}

The earliest works to consider differential privacy in a reinforcement learning context were focused on the bandit or contextual bandit settings \citep{guha2013nearly, mishra2015nearly, tossou2016algorithms, tossou2017achieving, SS2018, sajed2019optimal, zheng2020locally, dubey2020differentially, ren2020multi, chowdhury2022distributed, azize2022privacy}. 
A key negative result is in \citet{SS2018}, who show that sublinear regret is not possible under differential privacy in contextual bandits, a result which also holds in the reinforcement learning setting. 

Differentially private reinforcement learning beyond the bandit setting has been primarily considered in a personalization context. 
In \citet{BGP16}, the authors study policy evaluation under the setting where the RL algorithm receives a set of trajectories, and a neighbouring dataset is one in which a single trajectory differs. In a regret minimization context, there is a body of work on designing RL algorithms to satisfy either joint differential privacy \citep{VBKW20, luyo2021differentially, chowdhury2022differentially, ngo2022improved, zhou2022differentially, pmlr-v206-qiao23a} or local differential privacy \citep{GPPP2020, luyo2021differentially, chowdhury2022differentially, liao2023locally}. 
In all of these works, the RL algorithm is framed as interacting with users in trajectories or episodes, with each trajectory representing multiple interactions with a \emph{single} user.
A neighbouring dataset is then defined with respect to a neighbouring trajectory. 
Such DP-RL approaches cannot be easily adapted for privacy protection in population processes, where (i) each interaction is with an entire population (rather than a single individual); (ii) a specific individual is typically not sampled in consecutive time steps so there is not a corresponding notion of a trajectory; and (iii) an individual's data could actually be present across all time steps, a low-probability event that we nevertheless have to handle. 

In \citet{wang2019privacy}, the authors analyse the performance of deep Q-learning \citep{mnih2015DQN} under differential privacy guarantees specified with respect to neighbouring reward functions. This notion of privacy makes natural sense when the reward function is viewed as an individual's private preferences but is also inapplicable to our setting as it does not consider the privacy of the state. 

In relation to our application domain and experimental results, the control of epidemics is a topical subject given the prevalence of COVID-19 in recent years and many different practical approaches have been developed \citep{ArPe20, CELT20, CHRTOMP20, BKSE21, KCJB20, chen2022factored}. 
Whilst the preservation of individual privacy has been considered in the context of public release of population-level epidemic statistics \citep{dyda2021differential, li2023private}, we are not aware of other work that achieves individual privacy preservation in the modelling and control of epidemics using reinforcement learning. 

\subsection{Contributions}
The main questions we address in this paper are the following:
\begin{enumerate}
    \item For an RL algorithm interacting with a population of individuals, what is the right notion of individual privacy we should care about, and can that individual privacy be protected using differential privacy?
    \item Assuming the answer to the first question is positive, are good privacy-utility trade-offs possible for differentially private RL algorithms in population processes?
\end{enumerate}

We answer the first question in two parts. 
We first show through a concrete example (see Example~\ref{ex:flu_spread}) that highly correlated data is possible in population processes and that some sensitive information that one may naturally wish to protect, like a person's infection status during an epidemic, cannot always be protected.
We then give, using the general Pufferfish Privacy framework \citep{kifer2014pufferfish}, a precise semantic definition of the secrets around individual participation in samples that can actually be protected and show its equivalence to $k$-fold adaptive composition of DP mechanisms (see Lemma~\ref{thm:PF_equiv}). 
Building on that, we then describe a family of DP-RL algorithms for population processes and show how differential privacy and correlated data are somewhat orthogonal issues in our set up. 

On the second question, we note our DP-RL solution is a meta algorithm that takes any RL algorithm (whether online/offline or value-based/policy-based) as a black box and make it differentially private. 
Whilst such a modular solution is desirable for its simplicity and generality, the trade-off is a more difficult control problem because the underlying environment becomes partially observable to the RL agent. Standard methods for dealing with partial observability typically require expensive state-estimation techniques or sampling based approximations \citep{monahan1982state, shani2013survey, kurniawati2022partially}.
Instead of using these methods, we analyze the performance of standard RL algorithms as the sampled population size $N$ and privacy budget $\epsilon$ increases. Under some assumptions, we obtain the following bound on the approximation error under privacy
\begin{align}\label{eq:utility}
    \norm{Q^* - \tilde{Q}^*}_{\infty} &\leq \bigO \left( \sqrt{K}\exp\left( - \frac{\epsilon}{2\sqrt{2K}} \right) + K \exp\left( - \sqrt{N} \epsilon \right) + \frac{K^\frac{3}{2}}{\sqrt{N}} \right),
\end{align}
where $Q^*$ is the optimal value function for a given (arbitrary) population process $M$, $\tilde{Q}^*$ is the optimal value function for the privatized form of $M$, and $K$ is the dimension of the state space. 
(The precise statement is Theorem~\ref{thm:Q_approx_error} in \S~\ref{subsec:utility analysis}.)
Note that the RHS of (\ref{eq:utility}) goes to 0 as $N$ and $\epsilon$ increases, and that
such a result is not possible in the personalization setting \citep{SS2018}.
Whilst this result does not imply a finite-time sample complexity guarantee, we validate empirically on an epidemic control problem simulated over large graphs that our DP-RL algorithm behaves well as the population size and privacy budget increases, as suggested by (\ref{eq:utility}). 
Our results demonstrate that reasonable privacy-utility trade-offs are certainly possible for differentially private RL algorithms in population processes.


\section{Preliminaries}

\subsection{Reinforcement Learning and Markov Decision Processes}
We consider a time-homogeneous Markov Decision Process (MDP) $M = \left( \St, \A, P, r, \gamma \right)$ with state space $\St$, action space $\A$, transition function $P: \St \times \A \to \Dist(\St)$, reward function $r: \St \times \A \to [0, r_{\max}]$, discount factor $\gamma \in [0, 1)$. 
The notation $\Dist(\St)$ defines the set of distributions over $\St$. The rewards are assumed to be bounded between 0 and $r_{\max} \in \R$.
A stationary policy is a function $\pi: \St \to \Dist(\A)$ specifying a distribution over actions based on a given state, i.e. $a_t \sim \pi(\cdot \,|\,s_t)$.
A stationary deterministic policy assigns probability 1 to a single action in a given state. With a slight abuse of notation, we will define a stationary deterministic policy to have the signature $\pi: \St \to \A$.
Let $\Pi$ denote the set of all stationary policies. 
The action-value function (Q-value) of a policy $\pi$ is the expected cumulative discounted reward
$Q^{\pi}(s, a) = r(s, a) + \E_{P, \pi} \left[ \sum_{t=1}^{\infty} \gamma^t r(s_t, a_t)\right]$, where the expectation is taken with respect to the transition function $P$ and policy $\pi$ at each time step. The value function is defined as $V^{\pi}(s) = \E_{a \sim \pi(\cdot \,|\,s)}[Q^{\pi}(s, a)]$. 
The Q-value satisfies the Bellman equation given by $Q^{\pi}(s, a) = r(s, a) + \gamma \E_{P} \left[ V^{\pi}(s') \right]$. 

When considering the optimal policy, define $V^{*}(s) = \sup_{\pi \in \Pi} V^{\pi}(s)$ and $Q^{*}(s, a) = \sup_{\pi \in \Pi} Q^\pi(s, a)$. 
The optimal action-value function satisfies the bellman optimality equation given by $Q^{*}(s, a) = r(s, a) + \gamma \E_{P} \left[ \max_{a'} Q^*(s', a') \right]$. There exists an optimal stationary deterministic policy $\pi^* \in \Pi$ such that $V^{*}(s) = V^{\pi^*}(s)$. The greedy policy $\pi^*(s) = \arg\max_a Q^{*}(s, a)$ is in fact 
an optimal policy. 

We primarily consider the case when $\St$ is finite. In this case, it is helpful to view our functions as vectors and matrices. We use $P$ to refer to a matrix of size $(\abs{\St} \times \abs{\A}) \times \abs{\St}$ where $P_{sa}^{s'}$ is equal to $P(s' \,|\,s, a)$ and $P_{sa}$ is a length $\abs{\St}$ vector denoting $P(\cdot \,|\,s, a)$. Similarly, we can view $V^{\pi}$ as a vector of length $\abs{\St}$ and $Q^{\pi}$ and $r$ as vectors of length $\abs{\St} \times \abs{\A}$. The Bellman equation can now be expressed as
\begin{align*}
    Q^{\pi} = r + \gamma PV^{\pi}.
\end{align*}

\subsection{Stochastic Population Processes}
A stochastic population process models a stochastic system evolving over a collection of individuals and allows for heterogeneous interactions between them. Consider a population of $N^*$ individuals indexed by the set $[N^*] = \{1, \ldots, N^*\}$. Individual $i$'s status at time $t$ is given by the random variable $X_{t, i} \in [K]$, which takes one of $K$ values, and $X_t = (X_{t, 1}, \ldots, X_{t, N^*})$ denotes the random vector of all individuals' status. A graph at time $t$ is denoted by $G_t = (\mathcal{V}, \mathcal{E}_t)$ where the nodes $\mathcal{V}$ represent the individuals and $\mathcal{E}_t$ represent the interactions between individuals at time $t$. We assume that the total number of individuals is fixed but the edges evolve over time.  To denote sequences, let $Y_{1:t} = (Y_1, \ldots, Y_{t})$ and $Y_{<t} = (Y_1, \ldots, Y_{t-1})$. The graph at time $t$ evolves according to a distribution $G_t \sim P(\cdot \,|\,G_{<t})$. An individual's status depends only upon the previous graph and the previous status of its neighbours, thus satisfying the Markov property, and can be expressed as $P(X_t \,|\,G_{<t}, X_{<t}) = P(X_t \,|\,G_{t-1}, X_{t-1})$.
We will assume that individuals' initial status are drawn independently from a distribution $P(X_0 \,|\,G_{<0}, X_{<0}) = P(X_0)$ and are not dependent on any interaction graph.

\subsection{Differential Privacy}
Differential privacy is now a commonly accepted definition of privacy with good guarantees
even under adversarial settings \citep{dwork2006Laplace}.  
The differential privacy definition allows different notions of privacy by defining the concept of a neighbouring dataset appropriately. The following definition is a common choice  \citep{dimitrakakis2017differential}:
\begin{defn}[Differential Privacy]
    \label{defn:DP}
    A randomized mechanism $\Mech: \mathcal{X}^{n} \to \mathcal{U}$ is $(\epsilon, \delta)$-differentially private if for any $D \in \mathcal{X}^n$ and for any measurable subset $\Omega \subseteq \mathcal{U}$
    \begin{align*}
        P(\Mech(D) \in \Omega) \leq e^{\epsilon} P(\Mech(D') \in \Omega) + \delta,
    \end{align*}
    for all $D'$ in the Hamming-1 neighbourhood of $D$. That is, $D'$ may differ in at most one entry from $D$: there exists at most one $i \in [n]$ such that $D_i \neq D'_i$.
\end{defn}

A standard approach to privatising a query over an input dataset is to design a mechanism $\Mech$ that samples noise from a carefully scaled distribution and add it to the true output of the query. To scale the noise level appropriately, the sensitivity of a query is an important parameter.
\begin{defn}[$\ell_1$ sensitivity]
    Let $f$ be a function $f: \mathcal{X} \to \mathcal{U}$. Let $d: \mathcal{X} \times \mathcal{X} \to \{0,1\}$ be a function indicating whether two inputs are neighbours. The sensitivity of $f$ is defined as $\Delta_f = \sup_{x, x' \in \mathcal{X}: d(x, x') = 1} \norm{ f(x) - f(x') }_1$.
\end{defn}

Differential privacy has several well known properties, such as composing adaptively and being preserved under post-processing, that we will make use of.

\subsection{Pufferfish Privacy.} 
Pufferfish privacy was introduced in \citet{kifer2014pufferfish} and proposes a generalization of differential privacy from a Bayesian perspective. In the Pufferfish framework, privacy requirements are instantiated through three components: (i) $\mathbb{S}$, the set of secrets representing functions of the data that we wish to protect; (ii) $\mathbb{Q} \subseteq \mathbb{S} \times \mathbb{S}$, a set of secret pairs that need to be indistinguishable to an adversary; and (iii) $\Theta$, a class of distributions that can plausibly generate the data. Typically $\Theta$ is viewed as the beliefs that an adversary holds over how the data was generated. Pufferfish privacy is defined as follows:

\begin{defn}[Pufferfish Privacy]
    Let $(\mathbb{S}, \mathbb{Q}, \Theta)$ denote the set of secrets, secret pairs, and data generating distributions and let $D$ be a random variable representing the dataset.
    A privacy mechanism $\Mech$ is said to be $(\epsilon, \delta)$-Pufferfish private with respect to $(\mathbb{S}, \mathbb{Q}, \Theta)$ if for all $\theta \in \Theta$, $D \sim \theta$, for all $(s_i, s_j) \in \mathbb{Q}$, and for all $w \in Range(\Mech)$, we have
    \begin{align}
        e^{-\epsilon} \leq \frac{ P(\Mech(D) = \omega \,|\,s_i, \theta) - \delta }{ P(\Mech(D) = \omega \,|\,s_j, \theta) } \leq e^{\epsilon}, \label{eqn:eps_PF}
    \end{align}
    where $s_i, s_j$ are such that $P(s_i \,|\,\theta) \neq 0, P(s_j \,|\,\theta) \neq 0$.
\end{defn}

For $\delta = 0$, applying Bayes Theorem to Equation \ref{eqn:eps_PF} shows that Pufferfish privacy can be interpreted as bounding the odds ratio of $s_i$ to $s_j$; an attacker's belief in $s_i$ being true over $s_j$ can only increase by a factor of $e^{\epsilon}$ after seeing the mechanism's output. The main advantages of Pufferfish privacy are that it provides a formal way to explicitly codify what privacy means and what impact the data generating process has. This has been used to clarify, among other things, the semantics of differential privacy protection in the presence of possibly correlated data in \citet{kifer2014pufferfish}, a topic that motivated multiple attacks and possible solutions \citep{kifer2011no, liuMC16, srivatsa2012, zhao2017, yang2015bayesian, almadhoun2020}. 
We will similarly use the Pufferfish framework to make the semantics of our privacy guarantees in population processes, which can have correlated data, explicit.


\section{Problem Setting}
\label{sect:problem_setting}

\subsection{Model}
\label{sec:prob_model}

We now describe how the problem of controlling population processes is modelled as an MDP and also the underlying data-generation process.

The environment $\mathscr{E}$ is modelled as a stochastic population process evolving over $N^*$ individuals. 
The RL agent is denoted by $\mathscr{A}$. 
We assume there is a trusted data curator $\mathscr{D}$ that collects the data from the environment.
At each time step, $N$ individuals are randomly sampled (not necessarily uniformly) and their potentially sensitive data is collected by the data curator.
We consider the case where the interactions between individuals are unknown, i.e. $\mathscr{D}$ has no access to the graph sequence $G_{1:T}$. 
This models many problem domains such as the control of epidemics where the underlying interactions between people in a population are not visible and decisions can only be made based upon population statistics.
We consider the case of histogram queries. The agent $\mathscr{A}$ picks actions at each time step depending upon the received state and also computes its reward $r_t = r(s_t, a_{t-1})$ as a function of the current state and previous action. 
In summary, the environment, agent, and data curator interact in the following manner, given an initial graph $G_0$ and status $X_0$ generated from some distribution.

For time $t = 1, \ldots, T$:
\begin{enumerate}
        \item $\mathscr{E}$ generates the status for all individuals $X_{t} \sim P(\cdot \,|\,G_{t-1}, X_{t-1})$, with $X_t = (X_{t, 1}, \ldots, X_{t, N^*})$. 
        \item $\mathscr{D}$ samples a subset $L_t \subseteq [N^*]$ of size $N$ and produces the dataset $D_t = (X_{t, i})_{i \in L_t}$.
        \item $\mathscr{D}$ answers histogram query $q(D_t) = \frac{1}{N} \sum_{i \in L_t} (\Ind(X_{t, i} = \alpha))_{\alpha \in [K]}$.
        \item $\mathscr{A}$ receives state $s_t = q(D_t)$, computes reward $r_t = r(s_t, a_{t-1})$ and forms the transition sample $(s_{t-1}, a_{t-1}, s_t, r_t)$ to learn from.
        \item $\mathscr{A}$ picks the next action $a_t \sim \pi_t(\cdot \,|\,s_t)$.
        \item The graph $G_{t} \sim P(\cdot \,|\,G_{t-1}, a_t)$ is sampled depending on the last graph and action selected.
\end{enumerate}

The state space $\St$ thus consists of any vector $z \in [0, 1]^{K}$ that satisfies $\sum_{i \in [K]} z_i = 1$ and $z_i = c_i / N$ for some $c_i \in \{0, 1, \ldots, N\}$.
The action space $\A$ is application-dependent. 
In general, we consider the setting where a selected action $a_t \in \A$ modifies the edges in the graph. 
For example an action could correspond to cutting all edges for a subset of nodes, emulating the effect of `quarantining' individuals in an epidemic control context.
Thus the graph at time $t$ depends upon the action chosen and is distributed according to $P(G_t \,|\,G_{t-1}, a_t)$. These underlying transitions on individuals' states and the graph structure will induce a transition function $P: \St \times \A \to \Dist(\St)$ over the state space that implicitly captures the impact of the agent's actions.

An individual's data is exposed via the state histogram query $s_t = q(D_t)$.
The state query has sensitivity $\Delta_q = 2/N$ as using individual $j$'s data instead of individual $i$'s data can change the counts in at most two bins of the histogram.
Additionally an individual could be sampled multiple times by $\mathscr{D}$ over $T$ interactions so we need to ensure that their combined data over $T$ steps are treated in a differentially private way.

\begin{example}[Epidemic Control] \label{ex:epidemic_control}
Throughout this paper we consider the Epidemic Control problem as a concrete instantiation of our problem setting. One particular example is the Susceptible-Exposed-Infected-Recovered-Susceptible (SEIRS) process on contact networks \citep{Pastor:2015, Nowzari:2016, Newman:2018}.
An SEIRS process on contact networks is parametrized by a graph $G = (\mathcal{V}, \mathcal{E})$, representing interactions, and four transition rates $\beta, \sigma, \gamma, \rho > 0$.  
At any point in time, each individual is in one of four states: Susceptible ($\sS$), Exposed ($\sE$), Infected ($\sI$), or Recovered ($\sR$).
If individual $i$ is Susceptible at time $t$ and has interacted with $d_{t, i}$ individuals who are Infected, then individual $i$ becomes Exposed with probability $1-(1-\beta)^{d_{t, i}}$. Once Exposed, an individual becomes Infected after Geometric($\sigma$) amount of time. Similarly, an Infected individual becomes Recovered with Geometric($\gamma$) time and a Recovered individual becomes Susceptible again with Geometric($\rho$) time.
At each time $t$, the state is a histogram representing the proportion of individuals that are of each status. 
Instead of allowing all interactions, the agent's actions allow for a subset of nodes to be quarantined for one time step. This has the effect of modifying the graph's edges such that quarantined individuals have no edges for a single time step.
A typical reward function provides a cost proportional to the number of individuals quarantined and the number of infected individuals.

\begin{figure}[ht]
    \centering
    \begin{tikzpicture} 
        \node (s2) at (6,0) {$\sS$};
        \node (e2) at (8,0) {$\sE$};
        \node (i2) at (10,0) {$\sI$};
        \node (r2) at (12,0) {$\sR$};
        \draw [->,thick] (s2) to (e2) node[draw=none] at (7,-.25) {$\beta$};
        \draw [->,thick] (e2) to (i2) node[draw=none] at (9,-.25) {$\sigma$};
        \draw [->,thick] (i2) to (r2) node[draw=none] at (11,-.25) {$\gamma$};
        \draw [->,thick,bend right] (r2) to [looseness=.5] (s2) node[draw=none] at (9, .75) {$\rho$};
    \end{tikzpicture}
    \caption{Visualisation of the parameters that govern the transitions between states for individuals in the SEIRS process over contact networks.}
    \label{fig:epi}
\end{figure}
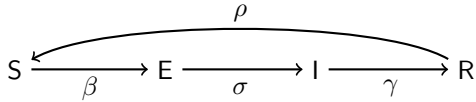

\end{example}

\begin{example}[Countering Misinformation]
The spread of misinformation in online social media is one of the key threats to society.
There is good literature on different (mis)information percolation and diffusion models \citep{del2016spreading, van2022misinformation} that take factors like homogeneity and polarisation into account, as well as containment strategies like running counter-information campaigns to debunk or `prebunk' misinformation, possibly through targeting of influencers in social networks \citep{budak2011limiting, nguyen2012containment, acemoglu2023model, ariely2023misbelief}.
Designing reinforcement learning algorithms that can detect and control spread of misinformation in a differentially private manner is another example of our general problem setting.
\end{example}

\begin{example}[Malware Detection and Control]
Malware propagation models on large-scale networks \citep{yu2014malware} and smartphones \citep{peng2013smartphone} have long been a subject of interest in cyber security, with more recent work focussing on malware propagation in internet-of-things \citep{li2020dynamics, yu2022sei2rs, muthukrishnan2021optimal}. 
Designing reinforcement learning algorithms to detect and control the spread of malware, especially unknown malwares whose signatures can only be discerned from collecting data on potentially sensitive device-usage behaviour, is also an example of our general problem setting.
\end{example}

\subsection{Formalizing Privacy in the Presence of Correlated Data}
\label{sec:formalizing_privacy}

In this section we clarify the precise privacy protections we will provide under differential privacy in our problem setting.
Protecting privacy in population processes presents some unique issues that are not typically encountered in the standard application of differential privacy. In particular, correlations between different individuals' data can easily arise due to the fact that individuals interact with each other at each time step and also over multiple time steps. 
Whilst differential privacy's guarantees are a mathematical statement that hold regardless of the process that generated the data, the semantics of these guarantees are often misinterpreted when it is applied. This leads to misalignment between what one hopes to keep private and what is actually kept private. We illustrate this with an example.

\begin{example}
\label{ex:flu_spread}
Consider a highly contagious but long recovery flu spreading in a tight-knit community of $N^*$ individuals who live in the same location. The dataset is $D_t = (X_{t, i})_{i \in L_t}$ where $L_t \subseteq [N^*]$ denotes a subset of individuals sampled for their flu status $X_{t,i} \in \{0,1\}$. 
As the individuals live in the same location, their interaction graph is a fully connected graph. The goal is to release the number of infected individuals $q_t = \sum_{i \in L_t} X_{t, i}$ at time $t$ whilst still preventing an adversary from detecting whether a particular individual, say Alice, has the flu at that point in time.
If the underlying data generating process is ignored, the statistic $q_t$ has sensitivity 1 and $\tilde{q}_t = q_t + Lap(1/\epsilon)$, where $Lap(1/\epsilon)$ is the noise generated from the Laplace mechanism, 
is an $\epsilon$-differentially private statistic. However, given the flu's characteristics and the fully connected interaction graph, we can say with high probability either all individuals or no individuals are infected at any time $t$. Thus, an adversary could guess from the value of $\tilde{q}_t$ which of these two scenarios is the case, thereby also recovering Alice's flu status with high probability. 
\end{example}

It would be natural to hope that an individual's flu status could be protected under differential privacy but Example \ref{ex:flu_spread} illustrates that this may not be possible if the data is correlated and the adversary has additional information about the problem. 
Solutions like Group Differential Privacy \citep{dwork2014algorithmic}, which adds noise proportional to the largest connected component in a graph, and Dependent Differential Privacy \citep{liuMC16}, which adds noise proportional to the amount of correlation in the data, exist but the amount of additional noise that is required typically destroys utility. 
So what can differential privacy protect in population processes in general and in special scenarios like Example \ref{ex:flu_spread}? 
The answer, as we shall see, is that differential privacy confers plausible deniability on an individual's participation or presence in a dataset sampled from the underlying population, but not the individual's actual status, which can sometimes be inferred. 
Example \ref{ex:flu_spread} highlights the need for privacy researchers to be
explicit about the data generating process, the adversary's assumptions, and what information one wishes to protect. 
This can be done using Pufferfish privacy and this is how we will make explicit the guarantees provided by differential privacy in our problem setting.

Given the data-generation model described in \S~\ref{sec:prob_model}, let $D_{1:T}$ be the sequence of sampled datasets, where $D_t = (X_{t,i})_{i \in L_t}$ denote the dataset produced at time $t$.
Given a subset $S$ of $[T]$, we first define the boolean variable $\sigma_{(i,S)}$ that indicates whether an individual $i$'s data is present in $D_{t}$ for each $t \in S$ and not anywhere else: 
\begin{align*}
    \sigma_{(i,S)} &\coloneqq \biggl(\, \bigwedge_{t \in S} \, i \in L_t \biggr) \wedge \biggl(\, \bigwedge_{t \in [T] \setminus S} i \notin L_t \biggr).
\end{align*}
We then define the secret pairs $\mathbb{Q}$ as follows, which state, in extensional form, that for each individual $i$, it is indistinguishable whether the person's status exists in a subset of the sampled datasets. 
\begin{align*}
    \mathbb{S} &\coloneqq \bigcup_{i \in [N^*]} \bigcup_{S \subseteq [T] : |S| \geq 1} \{ \sigma_{(i,S)} \} \\
    \mathbb{Q} &\coloneqq \bigcup_{i \in [N^*]} \bigcup_{S \subseteq [T] : |S| \geq 1} \bigcup_{R \subseteq S : |R| \geq 1}\{ (\sigma_{(i,S)}, \sigma_{(i, S\setminus R)} \}
\end{align*}

The additional element to specify in the Pufferfish framework is the data generating processes $\Theta$, representing the possible ways an attacker believes the data could have been generated.
We define each $\theta \in \Theta$ to be a parameterization of the form $\theta \coloneqq \{ \mathscr{E}, \mu_{t,1}, \ldots, \mu_{t,N^*} \}_{t=1\ldots T}$, where $\mathscr{E}$ represents the underlying stochastic population process and is a distribution over the graph and status sequences, and $\mu_{t,i}$ is the probability that $i \in L_t$. 
We assume the attacker also has a distribution over the agent's policy that is integrated out in $\mathscr{E}$.
Thus, we have
\begin{equation}
    P( D_{1:T} \,|\, \theta ) = \sum_{G_{1:T}} \sum_{X_{1:T}} \mathscr{E}(G_{1:T},X_{1:T}) \prod_{t=1}^T P( D_t \,|\, X_{t}, \theta),  
\end{equation}
where
\begin{equation}\label{eq:data generation}
    P( D_t \,|\, X_{t}, \theta ) = \prod_{i \in L_t} \mu_{t,i} \prod_{j \neq L_t} (1 - \mu_{t,j}).
\end{equation}
We do not place any restriction on $\mathscr{E}$, which models the attacker's prior belief over the likely sequence of interactions between individuals in the population and the attacker's knowledge about the dynamics model of the underlying population process. 

The following result states the equivalence between $(\epsilon, \delta)$-Pufferfish privacy with parameters $(\mathbb{S}, \mathbb{Q}, \Theta)$ and $(\epsilon, \delta)$-differential privacy under $T$-fold adaptive composition \citep{dwork2010boosting, dwork2014algorithmic}. The full proof is given in Appendix \ref{appendix:T2}. 

\begin{lemma}
    \label{thm:PF_equiv}
    A family of mechanisms $\mathcal{F}$ satisfies $(\epsilon, \delta)$-differential privacy under $T$-fold adaptive composition iff every sequence of mechanisms $\Mech = (\Mech_1, \ldots, \Mech_T)$, with $\Mech_i \in \mathcal{F}$, satisfies $(\epsilon, \delta)$-Pufferfish privacy with parameters $(\mathbb{S}, \mathbb{Q}, \Theta)$.
\end{lemma}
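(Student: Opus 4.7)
The plan is to prove both directions by exploiting one structural observation about the data generating model in Eq.~(\ref{eq:data generation}): conditioning on $\sigma_{(i,S)}$ versus $\sigma_{(i, S \setminus R)}$ changes the law of $D_{1:T}$ only by whether individual $i$'s data appears in $D_t$ for the time steps $t \in R$, and otherwise leaves everything (graph sequence, status sequence, and the sampling of all other individuals) identically distributed. Accordingly, the two conditional laws can be coupled via shared randomness $\xi = (G_{1:T}, X_{1:T}, \{L_t \setminus \{i\}\}_{t=1}^T)$, and for each $\xi$ they yield two deterministic dataset sequences $D^S(\xi), D^{S \setminus R}(\xi)$ that differ at exactly the $|R|$ time steps in $R$ by the data of the single individual $i$ --- precisely the neighbouring relation for $T$-fold adaptive composition.

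\textbf{Forward direction.} Assuming $\mathcal{F}$ is $(\epsilon,\delta)$-DP under $T$-fold adaptive composition, I would apply the composition hypothesis pointwise for each fixed $\xi$ to obtain
\[
P(\Mech(D^{S}(\xi)) = \omega) \;\leq\; e^{\epsilon}\, P(\Mech(D^{S \setminus R}(\xi)) = \omega) + \delta,
\]
and then integrate against the common law $P(\xi \,|\, \theta)$ to recover the Pufferfish inequality Eq.~(\ref{eqn:eps_PF}).

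\textbf{Reverse direction.} Assuming Pufferfish privacy for every sequence drawn from $\mathcal{F}$, I would, for any given pair of $T$-fold adaptive neighbouring sequences $D_{1:T}, D'_{1:T}$ that differ in individual $i$ at exactly the time steps in $R$, construct a specific $\theta \in \Theta$ whose two relevant conditionals collapse to point masses on these sequences. Concretely: take $S$ to be the set of time steps at which $i$ appears in the larger sequence, let $\mathscr{E}$ be a point mass on some $(G_{1:T}, X_{1:T})$ whose per-individual statuses agree with those appearing in $D_{1:T}$ and $D'_{1:T}$, set $\mu_{t,j} \in \{0,1\}$ for every $j \neq i$ so as to pin down the non-$i$ portion of $L_t$, and set $\mu_{t,i} = 1/2$ so that both $\sigma_{(i,S)}$ and $\sigma_{(i,S \setminus R)}$ have strictly positive probability. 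A direct check from Eq.~(\ref{eq:data generation}) confirms that the conditional laws of $D_{1:T}$ under the two secrets are exactly the point masses on $D_{1:T}$ and $D'_{1:T}$ respectively, so the Pufferfish bound specialises to $P(\Mech(D_{1:T}) = \omega) \leq e^{\epsilon} P(\Mech(D'_{1:T}) = \omega) + \delta$.

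\textbf{Main obstacle.} The hard part is the reverse direction: the constructed $\theta$ must simultaneously lie in the admissible class $\Theta$, assign strictly positive probability to both secrets (otherwise the Pufferfish ratio is vacuous), and concentrate the conditional law of $D_{1:T}$ on the prescribed neighbouring pair so that the Pufferfish bound transports to the DP bound without any slack. Adaptivity of the mechanism sequence is not a genuine obstacle, because both the DP and Pufferfish bounds quantify uniformly over output tuples $\omega = (\omega_1, \ldots, \omega_T)$, which absorbs the adaptive dependence of $\Mech_t$ on $\omega_{<t}$ at every step.
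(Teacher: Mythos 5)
Your proposal is correct and follows essentially the same route as the paper's proof: the forward direction is the same argument, with your coupling on shared randomness $\xi$ being exactly the content of the paper's identity $P(\mathfrak{D}_{1:T}=D_{1:T}\,|\,G_{1:T},X_{1:T},\sigma_{(i,S)},\theta)=P(\mathfrak{D}_{1:T}=D'_{1:T}\,|\,G_{1:T},X_{1:T},\sigma_{(i,S\setminus R)},\theta)$ followed by re-indexing the sum, and the reverse direction uses the same adversarial $\theta$ with $\mu_{t,i}=1/2$ and $\{0,1\}$-valued sampling probabilities for everyone else. If anything, your choice of a point mass for $\mathscr{E}$ in the reverse direction is slightly more careful than the paper, which asserts the conditional laws collapse to the two neighbouring sequences while allowing ``any'' $\mathscr{E}$ --- that collapse of the dataset \emph{values} (not just the membership sets) requires the status sequence to be pinned down, exactly as you do.
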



\section{Differentially Private Reinforcement Learning}

Our solution for differentially private reinforcement learning is presented in Algorithm \ref{alg:DPRL}. 
It is a meta algorithm that takes as input an MDP environment $M$, a reinforcement learning algorithm $\texttt{RL}$, a privacy mechanism $\Mech$, and parameters $(\epsilon, \delta)$ and $T$ that specify the level of privacy that should hold over $T$ interactions between the agent and the environment.
The RL algorithm can be any method that takes a transition sample $(s, a, r', s')$ and a policy $\pi_{\text{old}}$ and outputs a new policy $\pi_{\text{new}} = \texttt{RL}((s, a, r', s'), \pi_{\text{old}})$. (Appendix~\ref{appendix:DP_DQN} describes a concrete instantiation of Algorithm~\ref{alg:DPRL} using DQN as the RL algorithm.)

Our approach is to privatise the inputs before the RL algorithm receives them. We begin by first showing that Algorithm \ref{alg:DPRL} satisfies the privacy guarantees specified in \S~\ref{sec:formalizing_privacy}. We then characterize the resulting control problem under our privacy approach before providing a utility bound.

\begin{algorithm}[t]
\caption{Differentially Private Reinforcement Learning} \label{alg:DPRL}

\begin{algorithmic}[1]
\STATE \textbf{Input:} Environment $M = (\St, \A, r, P, \gamma)$, RL algorithm $\texttt{RL}: \St \times \A \times \Reward \times \St \times \Pi \to \Pi$, Privacy Mechanism $\Mech: \St \times \R \to \St$, initial state $s_0 \in \St$.
\STATE \textbf{Parameters} Privacy parameter $\epsilon'$, number of interactions $T$.
\STATE Randomly initialize policy $\pi_0$.
\STATE $\tilde{s}_{0} = \Mech_{\epsilon'}(s_{0})$. \label{alg:line:privatise1}
\FOR{$t = 0, 1, 2, \ldots, T-1$}
    \STATE $\tilde{a}_t \sim \pi_t(\cdot \,|\,\tilde{s}_t)$. \label{alg:line:policy}
    \STATE Receive $s_{t+1} \sim P(\cdot \,|\,s_t, \tilde{a}_t)$. \label{alg:line:transition_sample}
    \STATE $\tilde{s}_{t+1} = \Mech_{\epsilon'}(s_{t+1})$. \label{alg:line:privatise2}
    \STATE $\tilde{r}_t = r(\tilde{s}_{t+1}, \tilde{a}_t)$.
    \STATE $\pi_{t+1} \leftarrow \texttt{RL}((\tilde{s}_{t}, \tilde{a}_t, \tilde{r}_t, \tilde{s}_{t+1}), \pi_t)$.
\ENDFOR
\end{algorithmic}
\end{algorithm}


\subsection{Privacy Analysis}

Algorithm~\ref{alg:DPRL} is constructed using differential privacy tools to satisfy $(\epsilon, \delta)$-differential privacy under $T$-fold adaptive composition. 
Since an individual's data is used at each time $t$ to output a state $s_t$, we need to privatise $s_t$ and ensure that all functions that take $s_t$ as input are also differentially private. Algorithm~\ref{alg:DPRL} does this by privatising every state using an $\epsilon'$-differentially private mechanism (lines \ref{alg:line:privatise1} and \ref{alg:line:privatise2}).
For instance, the Laplace mechanism with scale parameter $2/N\epsilon'$ (since $\Delta_q = 2/N$) would satisfy $\epsilon'$-differential privacy. As the state space is discrete and bounded, we also need to project the output from the Laplace mechanism back to the closest element in the state space. (See Algorithm~\ref{alg:projected_Laplace} for more details.) The projection operation is guaranteed to be differentially private by Lemma~\ref{lemma:post processing}.

\begin{lemma}[Post-processing \citep{dwork2014algorithmic}]\label{lemma:post processing}
    Let $\Mech: \mathcal{X}^{n} \to Z$ be an $(\epsilon, \delta)$-differentially private mechanism and $f: Z \to Y$ an arbitrary function. Then the composition $f \circ \Mech$ is $(\epsilon, \delta)$-differentially private.
\end{lemma}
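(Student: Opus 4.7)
The plan is to proceed directly from the definition of $(\epsilon,\delta)$-differential privacy by reducing post-processing to the DP guarantee of $\Mech$ applied to the preimage under $f$. I will handle the deterministic case first, as it contains the entire idea, and then extend to randomized $f$ by a conditioning argument.

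First I would fix arbitrary neighbouring datasets $D, D' \in \mathcal{X}^n$ (differing in one entry) and an arbitrary measurable set $\Omega \subseteq Y$. The goal is to establish
\begin{equation*}
    P((f \circ \Mech)(D) \in \Omega) \;\leq\; e^{\epsilon}\, P((f \circ \Mech)(D') \in \Omega) + \delta.
\end{equation*}
For deterministic $f$, I would define the preimage $\Omega' \coloneqq f^{-1}(\Omega) = \{ z \in Z : f(z) \in \Omega \}$, which is measurable since $f$ is a measurable function (any function between the relevant spaces here can be taken to be measurable, or one restricts to the measurable case). Then the event $\{(f\circ \Mech)(D) \in \Omega\}$ equals $\{\Mech(D) \in \Omega'\}$, and applying Definition~\ref{defn:DP} to $\Mech$ on the set $\Omega'$ gives the result immediately.

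To handle the case where $f$ is randomized, I would represent $f$ via an independent source of randomness $R$ drawn from some distribution on an auxiliary space, so that $f(z) = g(z, R)$ for a deterministic measurable $g$. Conditioning on $R = \rho$ reduces the problem to a deterministic function $g(\cdot, \rho)$, to which the preimage argument above applies, yielding
\begin{equation*}
    P((g(\Mech(D), R)) \in \Omega \mid R=\rho) \;\leq\; e^{\epsilon}\, P((g(\Mech(D'), R)) \in \Omega \mid R=\rho) + \delta
\end{equation*}
for every $\rho$. Taking expectation over $R$ (which is independent of $\Mech$'s randomness, so the expectation passes through linearly and the additive $\delta$ is preserved) gives the desired inequality for $f \circ \Mech$.

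There is no serious obstacle: the only mildly subtle point is justifying that any randomized post-processor can be written as a deterministic function of its input together with independent randomness, so that the randomized case really does reduce to the deterministic one by averaging. Once this is noted, the result follows in one line per case, and since the inequality holds for all neighbouring $D, D'$ and all measurable $\Omega \subseteq Y$, $f \circ \Mech$ is $(\epsilon,\delta)$-differentially private.
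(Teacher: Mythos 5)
Your proof is correct: the preimage argument for deterministic $f$ followed by decomposing a randomized post-processor into a deterministic function of its input and independent randomness is exactly the standard argument from the cited source \citep{dwork2014algorithmic}, and the paper itself states this lemma without proof, relying on that reference. Note that the lemma as stated only requires deterministic $f$, so your randomized extension is a (correct) bonus rather than a necessity.
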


In Algorithm \ref{alg:DPRL}, the action $\tilde{a}_t$ is selected using the current policy $\pi_t$ with the privatized state $\tilde{s}_t$ as input at every time step (line \ref{alg:line:policy}). Once the next state $s_{t+1}$ is sampled, it is immediately privatized to $\tilde{s}_{t+1}$ (lines \ref{alg:line:transition_sample} and \ref{alg:line:privatise2}). The agent then receives reward $\tilde{r}_t = r(\tilde{s}_{t+1}, \tilde{a}_t)$. Since the action $\tilde{a}_t$ and received reward $\tilde{r}_t$ are functions of the privatized state, they are also guaranteed private by Lemma~\ref{lemma:post processing}.
Thus, the entire transition sample received by the RL algorithm is $\epsilon'$-differentially private.

Lemma $\ref{lemma:adaptive_comp}$ now states the cumulative privacy guarantee over $T$ interactions.
\begin{lemma}[$T$-fold adaptive composition \citep{dwork2010boosting, dwork2014algorithmic}]
    \label{lemma:adaptive_comp}
    For all $\epsilon', \delta', \delta \geq 0$, the class of $(\epsilon', \delta')$-differentially private mechanisms satisfies $(\epsilon, T\delta' + \delta)$-differential privacy under $T$-fold adaptive composition for:
    \begin{align}
        \epsilon = \sqrt{2T\log(1/\delta)} \epsilon' + T\epsilon' (e^{\epsilon'} - 1). \label{eqn:adaptive_comp_formula}
    \end{align}
\end{lemma}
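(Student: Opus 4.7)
The plan is to prove the result by analysing the privacy loss random variable (PLRV) of the composed mechanism via martingale concentration, which is the standard route taken by \citet{dwork2010boosting}. Fix any two neighbouring datasets $D, D'$ and let the composed mechanism output $(W_1, \dots, W_T)$ when run on $D$. Define the per-step PLRV $Z_t$ to be the log-ratio of the conditional density of $W_t$ given the history $W_{<t}$ under input $D$ versus input $D'$, evaluated at the realised $W_t$. By the standard characterisation of approximate DP in terms of max divergence, it suffices to show that, with probability at least $1 - (T\delta' + \delta)$ over $W_{1:T}$, the cumulative loss $\sum_{t=1}^T Z_t$ does not exceed $\epsilon$ as defined in (\ref{eqn:adaptive_comp_formula}).

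The argument requires two properties of each $Z_t$ to hold conditionally on the history $W_{<t}$: (i) a pointwise bound $|Z_t| \leq \epsilon'$, and (ii) a mean bound $\mathbb{E}[Z_t \,|\, W_{<t}] \leq \epsilon'(e^{\epsilon'} - 1)$. For the pure-DP case ($\delta'=0$), (i) is immediate from $(\epsilon', 0)$-DP of each $\mathcal{M}_t$ applied to the conditional distribution of $W_t$. For (ii) I would use the elementary KL-divergence bound for distributions whose log-ratios lie in $[-\epsilon', \epsilon']$: one shows that $D_{\mathrm{KL}}(P \| Q) \leq \epsilon'(e^{\epsilon'} - 1)$ whenever $|\log(P/Q)| \leq \epsilon'$ everywhere, via the inequality $p\log(p/q) \leq (p-q) + (p-q)^2/q$ and symmetrisation. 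Since $\mathbb{E}[Z_t \,|\, W_{<t}]$ is exactly this conditional KL-divergence, (ii) follows.

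With (i) and (ii) in place, $(Z_t - \mathbb{E}[Z_t \,|\, W_{<t}])_{t=1}^T$ is a martingale-difference sequence whose increments lie in an interval of width at most $2\epsilon'$. Applying Azuma–Hoeffding gives, with probability at least $1 - \delta$,
\[
\sum_{t=1}^T Z_t \;\leq\; T\,\epsilon'(e^{\epsilon'}-1) + \sqrt{2T\log(1/\delta)}\,\epsilon' \;=\; \epsilon.
\]
Converting back through the DP/max-divergence equivalence yields the $(\epsilon, \delta)$-DP conclusion in the $\delta'=0$ case. To handle general $\delta'>0$, I would invoke the coupling characterisation of approximate DP: every $(\epsilon', \delta')$-DP mechanism $\mathcal{M}_t$ admits a decomposition of its output distribution into a $(1-\delta')$-mass component on which the log-ratio is bounded by $\epsilon'$ pointwise, plus a $\delta'$-mass exceptional component. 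A union bound over the $T$ steps throws away at most $T\delta'$ in total probability, reducing to the pure-DP analysis on the good event and giving the overall $(\epsilon, T\delta' + \delta)$-DP guarantee.

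The main obstacle is ensuring the argument adapts correctly to the adaptive setting: both the choice of mechanism $\mathcal{M}_t$ and the effective dataset queried at step $t$ may depend on the entire history $W_{<t}$, so the $Z_t$ are not independent and the mean bound (ii) must be established conditionally on $W_{<t}$, uniformly over histories. The second delicate point is the coupling reduction for $(\epsilon', \delta')$-DP: one must verify that conditioning on the good event preserves enough of the martingale structure for Azuma–Hoeffding to apply, which is handled by working with the max-divergence reformulation rather than with the raw probability inequalities.
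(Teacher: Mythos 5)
The paper does not prove this lemma: it is imported verbatim from \citet{dwork2010boosting, dwork2014algorithmic} as the advanced composition theorem, so there is no in-paper proof to compare against. Your sketch is a faithful outline of the standard argument in those cited references --- the privacy loss random variable $Z_t$, the conditional mean bound $\mathbb{E}[Z_t \mid W_{<t}] \le \epsilon'(e^{\epsilon'}-1)$ via the KL-divergence lemma for distributions with two-sided max-divergence at most $\epsilon'$, Azuma--Hoeffding applied to the centered losses (whose conditional range has width $2\epsilon'$, yielding the $\sqrt{2T\log(1/\delta)}\,\epsilon'$ deviation term), and the $\delta'$-mass coupling decomposition with a union bound contributing $T\delta'$ --- and it is correct.
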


Combining Lemma~\ref{thm:PF_equiv}, Lemma~\ref{lemma:post processing} and Lemma~\ref{lemma:adaptive_comp} then yields the following result.
\begin{theorem}
Algorithm~\ref{alg:DPRL} satisfies $(\epsilon,\delta)$-differential privacy under T-fold adaptive composition, and equivalently $(\epsilon,\delta)$-Pufferfish privacy with parameters $(\mathbb{S}, \mathbb{Q}, \Theta)$ as defined in \S~\ref{sec:formalizing_privacy}.
\end{theorem}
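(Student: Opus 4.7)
The plan is to chain the three results already established in this section: per-step $\epsilon'$-differential privacy of the state-releasing mechanism, post-processing to propagate that privacy to the action and reward derived from it, $T$-fold adaptive composition to accumulate the privacy cost across interactions, and finally Lemma~\ref{thm:PF_equiv} to convert the resulting DP guarantee into the claimed Pufferfish guarantee.

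First I would verify that the map $D_t \mapsto \tilde{s}_t = \Mech_{\epsilon'}(q(D_t))$ is $\epsilon'$-DP at each time step. The histogram query $q$ has sensitivity $\Delta_q = 2/N$ because replacing a single individual's record can change at most two bins of $q(D_t)$ by $1/N$ each, so the Laplace mechanism with scale $2/(N\epsilon')$ is $\epsilon'$-DP by the standard calculation, and the subsequent projection back onto the discretized state space $\St$ is a data-independent post-processing step, which preserves the guarantee by Lemma~\ref{lemma:post processing}. Next, the full transition sample $(\tilde{s}_t, \tilde{a}_t, \tilde{r}_t, \tilde{s}_{t+1})$ consumed by the RL subroutine at step $t$ is a post-processing of $(\tilde{s}_t, \tilde{s}_{t+1})$ together with the prior privatized history and independent randomness: $\tilde{a}_t \sim \pi_t(\cdot\,|\,\tilde{s}_t)$ where $\pi_t$ is built from earlier private transitions, and $\tilde{r}_t = r(\tilde{s}_{t+1}, \tilde{a}_t)$ is a deterministic function of private quantities. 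Applying Lemma~\ref{lemma:post processing} again shows that releasing the full transition sample spends no additional privacy budget. Consequently, the adversary's view across rounds $1,\ldots,T$ is the adaptive composition of $T$ per-round $\epsilon'$-DP releases applied to $D_1, \ldots, D_T$, so Lemma~\ref{lemma:adaptive_comp} yields $(\epsilon, \delta)$-DP under $T$-fold adaptive composition for $\epsilon$ as in (\ref{eqn:adaptive_comp_formula}). The Pufferfish half of the claim is then immediate from Lemma~\ref{thm:PF_equiv}.

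The main subtlety is handling adaptivity correctly: the policy $\pi_t$ is itself derived from the sensitive history, so formally the mechanism acting on $D_t$ depends on earlier outputs. The adaptive composition framework is precisely what licenses this, but one must be explicit that, conditioned on the history $(\tilde{s}_{<t}, \tilde{a}_{<t}, \tilde{r}_{<t})$, the release $\tilde{s}_t$ is still $\epsilon'$-DP with respect to a one-record change in $D_t$. This holds because the Laplace noise drawn at step $t$ is fresh and independent of the history, and the projection onto $\St$ is deterministic, so the conditional mechanism has the same $\epsilon'$-DP guarantee as the unconditional one. With this verified, the composition lemma applies verbatim and the theorem follows.
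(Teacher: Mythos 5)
Your proposal is correct and follows essentially the same route as the paper: per-step $\epsilon'$-differential privacy of the projected Laplace mechanism, post-processing (Lemma~\ref{lemma:post processing}) for the action and reward, $T$-fold adaptive composition (Lemma~\ref{lemma:adaptive_comp}), and Lemma~\ref{thm:PF_equiv} for the Pufferfish equivalence. Your explicit check that the step-$t$ release remains $\epsilon'$-DP conditioned on the privatized history is a useful clarification of a point the paper handles only implicitly.
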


\paragraph{Truthfulness in Data Collection}
A related question to privacy protection is an individual's willingness to honestly disclose her data to the data curator.
For example, in the context of epidemic control, each individual $i$ may have a utility function $u_i : \mathbb{S} \to [0,1]$ mapping states to a positive real number, with states that represent high infection rates assigned lower values because they are, perhaps, more likely to attract a mandated lock-down of the community.
Would an individual who is sampled for data collection gain an advantage by misreporting her true infection status?

\begin{defn}
Given a (randomized) mechanism $\Mech : \mathbb{S} \to \mathbb{S}$, truthful reporting is an $\epsilon$-approximate dominant strategy for individual $i$ with utility $u_i$ and status $x_i$ if, for every dataset $D$ and every $y_i \neq x_i$, 
\[ \mathbb{E}_{o \sim \Mech(q(D \cup \{x_i\}))} \, u_i(o) \geq \mathbb{E}_{o \sim \Mech(q(D \cup \{y_i\}))} \, u_i(o) - \epsilon. \]
If truth reporting is an $\epsilon$-dominant strategy for every individual in the population, we say $\Mech$ is $\epsilon$-approximately dominant strategy truthful.
\end{defn}

In \citet{mcsherry2007mechanism}, the authors show that any $(\epsilon,0)$-differentially private mechanism is $\epsilon$-approximately dominant strategy truthful, which we can see by noting that
\begin{align*}
\mathbb{E}_{o \sim \Mech(q(D \cup \{x_i\}))} \, u_i(o) &= \sum_{o} u_i(o) P(\Mech(q(D \cup \{x_i\})) = 0) 
 \geq \sum_{o} u_i(o) e^{-\epsilon} P(\Mech(q(D \cup \{y_i\})) = 0) \\
&= e^{-\epsilon} \,\mathbb{E}_{o \sim \Mech(q(D \cup \{y_i\}))} \, u_i(o) 
 \geq \mathbb{E}_{o \sim \Mech(q(D \cup \{y_i\}))} \, u_i(o) - \epsilon.
\end{align*}
The first inequality follows from Definition~\ref{defn:DP}.
The second inequality follows by noting that for $\epsilon < 1, e^{-\epsilon} \geq 1 - \epsilon$.
Note that the argument does not work for $(\epsilon,\delta)$-differentially private mechanisms where $\delta > 0$.
Algorithm~\ref{alg:DPRL} uses only an $(\epsilon,0)$ differentially private mechanism in lines \ref{alg:line:privatise1} and \ref{alg:line:privatise2}.


\subsection{Utility Analysis}\label{subsec:utility analysis}
We now analyze the utility of our DPRL approach and present a theoretical result that bounds the approximation error of the optimal value function under privacy from the true optimal value function. 
The analysis we provide is asymptotic in nature and serves as an important step in establishing that good solutions are possible in our problem setting.

Whilst our approach to privacy in Algorithm \ref{alg:DPRL} makes it easy to guarantee the differential privacy of any downstream RL algorithm, the learning and control problems are made more difficult as the true underlying process is now unobservable to the agent. Figure \ref{fig:DP_POMDP} visualizes the graphical model under our approach and highlights that the state, privatized states and actions evolve according to a partially-observable markov decision process (POMDP).

\begin{figure}[ht]
\centering
\resizebox{4.5in}{!}{\begin{tikzpicture}
 \matrix[matrix of math nodes,column sep=2em,row
  sep=1em,cells={nodes={circle,draw,minimum width=4em,inner sep=0pt}},
  column 1/.style={nodes={circle,fill=white, draw=none}},
  column 6/.style={nodes={circle,fill=white, draw=none}},
  ampersand replacement=\&] (m) {
 \text{Unobservable}: \& s_0 \&  \& s_1 \& \& \cdots \&  \& s_T\\
 \text{Observable}: \&   \& \tilde{a}_0 \&  \& \tilde{a}_1 \& \cdots \& \tilde{a}_{T-1} \& \\
  \& \tilde{s}_0 \&  \& \tilde{s}_1 \& \& \cdots \&  \& \tilde{s}_T\\
 };

 \draw[-latex] (m-1-2) -- (m-1-4);
 \draw[-latex] (m-1-4) -- (m-1-6);
 \draw[-latex] (m-1-6) -- (m-1-8);

 \draw[-latex] (m-1-2) -- (m-3-2);
 \draw[-latex] (m-1-4) -- (m-3-4);
 \draw[-latex] (m-1-8) -- (m-3-8);

 \draw[-latex] (m-3-2) -- (m-2-3);
 \draw[-latex] (m-3-4) -- (m-2-5);
 \draw[-latex] (m-3-6) -- (m-2-7);

 \draw[-latex] (m-2-3) -- (m-1-4);
 \draw[-latex] (m-2-5) -- (m-1-6);
 \draw[-latex] (m-2-7) -- (m-1-8);

 \draw[dashed] ([yshift=5.5ex]m.east) -- ([yshift=5.5ex]m.east-|m-1-1.east);
\end{tikzpicture}}
\caption{A graphical model of the underlying state and action sequence under our differentially private reinforcement learning approach.
The true states are unobservable.}
\label{fig:DP_POMDP}
\end{figure}
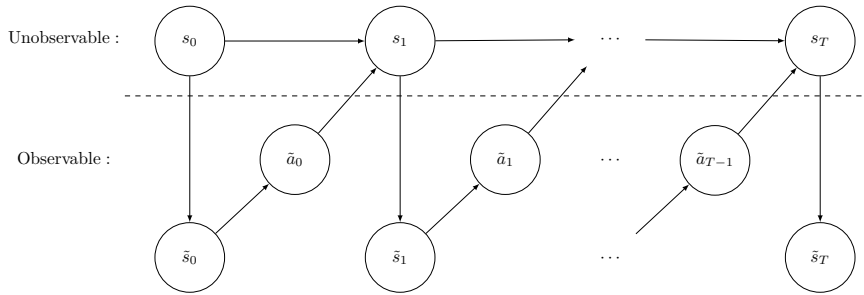

One subtle difference between a POMDP and our privatized system however is that the observed reward is not a direct function of the underlying state as that would constitute a privacy leak.
The typical methods for solving POMDP problems resort to computationally expensive state-estimation techniques or sampling based approximations \citep{monahan1982state, shani2013survey, kurniawati2022partially}.
Instead, we analyse the approximation error when standard MDP RL algorithms are applied directly on the observed privatized states without resorting to state-estimation. 
The analysis is done under several assumptions. 
\begin{assumption}[Ergodicity]
\label{ass:ergodic_MDP}
The underlying MDP environment is ergodic \citep{puterman2014markov}. 
An ergodic MDP ensures that a stationary distribution over states is well defined under any stationary policy. 
\end{assumption}

Assumption \ref{ass:ergodic_MDP} is a necessary tool when analyzing the asymptotic performance of RL algorithms and it is commonly used to ensure that every state-action pair is visited infinitely often (see e.g. \citet{singh1994learning}). 

\begin{assumption}[Lipschitz dynamics]
    \label{ass:lipschitz_dynamics}
    For all $s, s' \in \St, ~a \in \A$, there exists $L > 0$ such that
    \begin{align*}
        \norm{P(\cdot \,|\,s, a) - P(\cdot \,|\,s', a)}_1 \leq L \norm{s - s'}_1.
    \end{align*}
\end{assumption}

Since the privatized environment evolves as a POMDP, the distribution for the privatized state $\tilde{s}_t$ will in general depend on the entire history of observed states, actions and rewards. 
However, when an MDP RL algorithm is directly applied on top of privatized states, the transitions between privatized states are assumed to be Markovian. 
The induced transition model will however depend on the asymptotic state distribution under a behaviour policy generating interactions.
For our analysis, we consider an off-policy setting where a stationary behaviour policy $\pi$ generates a sequence of privatized states, actions, and rewards. 
The induced Markovian transition model $\tilde{P}^{\pi}: \St \times \A \to \Dist(\St)$ describes the asymptotic transition probabilities and is given by
\begin{align}\label{eqn:P_tilde_pi}
    \tilde{P}^{\pi}(\tilde{s}_{t+1} \,|\,\tilde{s}_t, \tilde{a}_t) = \sum_{s_t \in \St} \nu_{\pi}(s_t \,|\,\tilde{s}_t, \tilde{a}_t) \sum_{s_{t+1} \in \St} P(s_{t+1} \,|\,s_t, \tilde{a}_t) P_{\Mech}(\tilde{s}_{t+1} \,|\,s_{t+1}).
\end{align}
Here $P_{\Mech}(\tilde{s} \,|\,s) = \Pa(\Mech(s) = \tilde{s})$ denotes the distribution of the state privatization mechanism and $P(s_{t+1} \,|\,s_t, a_t)$ denotes the transition matrix of the underlying MDP. 
The transition model $\tilde{P}^{\pi}$ depends upon the behaviour policy $\pi$ through the distribution $\nu_{\pi}(s_t \,|\,\tilde{s}_t, \tilde{a}_t)$, 
which is the asymptotic probability of the underlying state being $s_t$ under $\pi$ when $\tilde{s}_t$ is observed and $\tilde{a}_t$ is performed. Using Bayes theorem, it can be expressed as
\begin{align}
    \nu_{\pi}(s \,|\,\tilde{s}, \tilde{a}) &= \frac{ P_{\Mech}(\tilde{s} \,|\,s) \nu_{\pi}(s \,|\,\tilde{a}) }{ \sum_{s' \in \St} P_{\Mech}(\tilde{s} \,|\,s') \nu_{\pi}(s' \,|\,\tilde{a}) } \nonumber \\
    &= \frac{ P_{\Mech}(\tilde{s} \,|\,s) \nu_{\pi}(s \,|\,\tilde{a}) }{ \tilde{\nu}_{\pi}(\tilde{s} \,|\,\tilde{a}) }, \label{eqn:bayes_state_dist}
\end{align}
where $\nu_{\pi}(s \,|\,\tilde{a})$ denotes the asymptotic probability of the underlying state being $s$ when $\tilde{a}$ is performed, and $\tilde{\nu}_{\pi}(\tilde{s} | \tilde{a}) = \sum_{s' \in \St} P_{\Mech}(\tilde{s} \,|\,s') \nu_{\pi}(s' \,|\,\tilde{a})$.
To avoid degenerate cases, we will assume that the behaviour policy is stochastic and assigns non-zero probability to every action in all states.
\begin{assumption}
    \label{ass:non_zero_policy}
    The behaviour policy $\pi$ is a stochastic policy where $\forall s \in \St, \forall a \in \A$, $\pi(a \,|\,s) > 0$. 
\end{assumption}
Under Assumptions \ref{ass:ergodic_MDP} and \ref{ass:non_zero_policy}, the distributions $\nu_{\pi}(s \,|\,\tilde{a})$ and (\ref{eqn:bayes_state_dist})
are well defined.

Under our privatisation scheme, the induced MDP on top of privatized states is given by ${\tilde{M} = (\St, \A, \tilde{P}^{\pi}, r, \gamma)}$. Note that the reward function does not change as the received rewards are functions of the privatized states. For any policy $\bar{\pi} \in \Pi$, the Q-value $\tilde{Q}^{\bar{\pi}}$ in $\tilde{M}$ satisfies the Bellman equation $\tilde{Q}^{\bar{\pi}} = r + \gamma \tilde{P}^{\pi} \tilde{V}^{\bar{\pi}}$ where $\tilde{V}^{\bar{\pi}}(s) = \E_{a \sim \bar{\pi}(\cdot \,|\,s)} [\tilde{Q}^{\bar{\pi}}(s, a)]$.

Our analysis is performed using the projected laplace mechanism detailed in Algorithm \ref{alg:projected_Laplace}. The projected laplace mechanism first applies additive noise to the input state as per the standard laplace mechanism. 
We require that the privacy mechanism we use has the state space as its output domain. Adding laplace noise no longer guarantees that the output will be in the state space, and so we take the orthogonal projection back onto the state space. This operation is guaranteed to be differentially private by Lemma~\ref{lemma:post processing}.

\begin{algorithm}[ht]
\caption{Projected Laplace Mechanism \label{alg:projected_Laplace}}
    \begin{algorithmic}[1]
        \STATE \textbf{Input:} state $s \in \St$
        \STATE \textbf{Parameters:} Privacy parameter $\epsilon$, sensitivity parameter $\Delta_q$.
        
        $ $
        \STATE $s' = s + \eta$, where $\eta \in \R^{K}$ and for $i \in [K],~\eta_i \sim \text{Lap}\left( \frac{\Delta_q}{\epsilon} \right)$.
        \STATE $\tilde{s} = \arg\min_{\bar{s} \in \St} \norm{ s' - \bar{s} }_2$.
        \STATE \textbf{Return} $\tilde{s}$.
    \end{algorithmic}
\end{algorithm}

The projected laplace mechanism satisfies the following tail bound. The proof is provided in Appendix \ref{appendix:T1}. 
\begin{theorem}
    \label{thm:proj_laplace_util}
    Let $X_t = (X_{t, i})_{i \in L_t}$ denote a dataset and let $s = q(X_t)$ and $\tilde{s} = \Mech(s)$, where $\Mech$ is the Projected Laplace mechanism. Then for all $\epsilon > 0$ and $\alpha > 0$,
    \begin{align*}
        \Pa \left( \norm{s - \tilde{s}}_\infty \geq \alpha + \frac{1}{\sqrt{2}N} \right) \leq K \exp \left( - \frac{N \alpha \epsilon}{2 \sqrt{K}} \right).
    \end{align*}
\end{theorem}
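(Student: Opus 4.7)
The plan is to decompose $\|s - \tilde{s}\|_\infty$ into a Laplace-noise piece and a projection-onto-the-lattice piece, then apply tail bounds to the former while controlling the latter by the geometry of $\St$.

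First, write $s' = s + \eta$ with $\eta_i \sim \text{Lap}(2/(N\epsilon))$ independent, and let $s^\perp := \pi_H(s')$ be the orthogonal projection of $s'$ onto the affine hull $H = \{z \in \R^K : \sum_i z_i = 1\}$, which contains $\St$. For every $\bar{s} \in H$ one has the Pythagorean identity $\|s' - \bar{s}\|_2^2 = \|s' - s^\perp\|_2^2 + \|s^\perp - \bar{s}\|_2^2$, so $\tilde{s}$ (the $\ell_2$-nearest point of $\St$ to $s'$) also equals the $\ell_2$-nearest point of $\St$ to $s^\perp$. Combined with the triangle inequality and $\|\cdot\|_\infty \leq \|\cdot\|_2$,
\[ \norm{s - \tilde{s}}_\infty \;\leq\; \norm{s - s^\perp}_2 + \norm{s^\perp - \tilde{s}}_2. \]

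The first summand is controlled by the noise: $s^\perp - s = \eta - \bar{\eta}\mathbf{1}$ with $\bar{\eta} = (1/K)\sum_i \eta_i$, so $\norm{s - s^\perp}_2^2 = \norm{\eta}_2^2 - K\bar{\eta}^2 \leq \norm{\eta}_2^2$. The second summand I would bound geometrically. Viewed inside $H$, $\St$ is a rescaling by $1/N$ of the $A_{K-1}$ root lattice: any two adjacent lattice points differ by $(e_i - e_j)/N$, hence the minimum distance in $\St$ is $\sqrt{2}/N$ and the Voronoi cell around any lattice point has inscribed $\ell_2$ radius $1/(\sqrt{2}N)$. Combined with the defining property $\norm{s^\perp - \tilde{s}}_2 \leq \norm{s^\perp - s}_2$ (since $s \in \St$ is itself a candidate), this yields the deterministic inequality $\norm{s - \tilde{s}}_\infty \leq \norm{\eta}_2 + 1/(\sqrt{2}N)$.

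Finally I would apply a Laplace tail bound. Each $\eta_i$ satisfies $\Pa(|\eta_i| > t) = e^{-tN\epsilon/2}$. A union bound over $K$ coordinates at level $t = \alpha/\sqrt{K}$ gives
\[ \Pa\!\left(\norm{\eta}_\infty > \tfrac{\alpha}{\sqrt{K}}\right) \;\leq\; K \exp\!\left(-\tfrac{N\alpha\epsilon}{2\sqrt{K}}\right), \]
and on the complementary event $\norm{\eta}_2 \leq \sqrt{K}\cdot(\alpha/\sqrt{K}) = \alpha$. Substituting into the deterministic bound of the previous paragraph proves the theorem.

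The main obstacle is the lattice-geometric step: pinning down the $1/(\sqrt{2}N)$ constant for the projection piece. The half-minimum-distance $1/(\sqrt{2}N)$ is the $\ell_2$ in-radius of the Voronoi cell of any lattice point in $\St$, but it is not the full $A_{K-1}$ covering radius (which is larger for $K \geq 3$); some care is therefore needed to argue that the part of $\norm{s^\perp - \tilde{s}}_2$ not already absorbed into $\norm{s - s^\perp}_2$ is bounded by $1/(\sqrt{2}N)$. A clean approach is to case-split on whether $s^\perp$ lies inside $\tilde{s}$'s inscribed Voronoi ball, and to exploit that $s$ itself is a candidate lattice point in the other case. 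The remaining calculations---Pythagoras, triangle inequality, and Laplace-tail bookkeeping---are routine.
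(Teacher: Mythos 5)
Your proposal follows essentially the same route as the paper's proof: split the error into a Laplace-noise part and a lattice-rounding part, pay a factor of $\sqrt{K}$ moving between $\ell_\infty$ and $\ell_2$, and finish with a per-coordinate Laplace tail bound plus a union bound. Your affine-hull projection $s^\perp = \pi_H(s')$ together with the Pythagorean identity is a slightly cleaner rendering of the paper's two-stage projection through $\Delta_K$ and its $\cos^{-1}(\theta)$ bookkeeping, but it is the same decomposition. The step you flag as the main obstacle is exactly the step the paper glosses over: the paper asserts that $P_{\St}(s')$ is within $\frac{1}{\sqrt{2}N}$ of $P_{\Delta_K}(s')$, justified only by the minimum inter-point distance $\sqrt{2}/N$ of $\St$. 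As you observe, that argument controls the in-radius of the Voronoi cells, not the covering radius; for $K \geq 3$ there are points of the simplex whose distance to the nearest point of $\St$ is of order $\sqrt{K}/N$, so the constant $\frac{1}{\sqrt{2}N}$ is not actually established, in your write-up or in the paper's.

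Your proposed repair does not quite recover the stated constant either. When $s^\perp$ lies in the inscribed Voronoi ball of some lattice point you do get $\norm{s^\perp - \tilde{s}}_2 \leq \frac{1}{\sqrt{2}N}$ and the bound follows; but in the complementary case the most that ``$s$ is a candidate'' yields is $\norm{s^\perp - \tilde{s}}_2 \leq \norm{s^\perp - s}_2$, hence $\norm{s - \tilde{s}}_2 \leq 2\norm{\eta}_2 \leq 2\alpha$ on the good event, and since that case forces $\norm{s - s^\perp}_2 > \frac{1}{\sqrt{2}N}$, the quantity $2\alpha$ exceeds $\alpha + \frac{1}{\sqrt{2}N}$ whenever $\alpha > \frac{1}{\sqrt{2}N}$. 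What your argument does give unconditionally is $\Pa\left(\norm{s - \tilde{s}}_\infty \geq 2\alpha + \frac{1}{\sqrt{2}N}\right) \leq K \exp\left(-\frac{N\alpha\epsilon}{2\sqrt{K}}\right)$, i.e., the theorem with $\epsilon$ degraded by a constant factor; since Theorem~\ref{thm:Q_approx_error} consumes this bound only inside a big-$O$, that weaker version suffices downstream. In short: same approach as the paper, and you have correctly isolated a genuine gap that the paper's own proof shares; closing it exactly as stated would require replacing $\frac{1}{\sqrt{2}N}$ by the covering radius of the rescaled $A_{K-1}$ lattice, or accepting the constant-factor loss your case-split provides.
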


Other privacy mechanisms were also considered but each have their own shortcomings. 
The exponential mechanism is a natural choice as it can be configured to output elements of the state space directly but the exponential mechanism is impractical to sample from. 
The output domains of additive noise mechanisms are typically unbounded and need to be modified for our problem setting. 
Whilst other approaches exist to bound the output to a particular domain, such as truncating the output \citep{holohan2020bounded}, we find working with the orthogonal projection simplifies the analysis.
Another reason for working with the projected laplace mechanism is the fact that it can satisfy `pure' differential privacy (i.e. $\delta = 0$). This is in contrast to mechanisms like the discrete or continuous gaussian mechanism \citep{dwork2014algorithmic, canonne2020discrete} which only satisfy approximate differential privacy (i.e. $\delta > 0$). Since we use adaptive composition, satisfying approximate differential privacy at each time step would lead to the privacy budget increasing linearly with $T$ which is undesirable.

Our main utility result is the following bound on the approximation error between the optimal Q value in the underlying MDP $M$ and the privatized MDP $\tilde{M}$.
\begin{theorem}\label{thm:Q_approx_error}
    Let $M$ be the MDP environment and $\tilde{M}$ denote the privatised MDP under an $\epsilon$-differentially private projected laplace mechanism. Let $Q^*$ be the optimal value function in $M$ and $\tilde{Q}^{*}$ be the optimal value function in $\tilde{M}$. Then,
    \begin{align*}
        \norm{Q^* - \tilde{Q}^*}_{\infty} &\leq \bigO \left( \sqrt{K}\exp\left( - \frac{\epsilon}{2\sqrt{2K}} \right) + K \exp\left( - \sqrt{N} \epsilon \right) + \frac{K^\frac{3}{2}}{\sqrt{N}} \right).
    \end{align*}
\end{theorem}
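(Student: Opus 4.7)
The plan is to invoke a standard simulation (perturbation) lemma to reduce the $Q$-function gap to a worst-case $\ell_1$ distance between the transition kernels of $M$ and $\tilde{M}$, then control that kernel gap by decomposing it into a Lipschitz-bias component and a mechanism-noise component, each of which is bounded using Theorem~\ref{thm:proj_laplace_util}. First I would note that because rewards lie in $[0, r_{\max}]$ we have $\norm{V^*}_\infty \leq r_{\max}/(1-\gamma)$, and a pointwise comparison of the Bellman optimality equations of $M$ and $\tilde{M}$, combined with the standard estimate $|\mathbb{E}_P f - \mathbb{E}_Q f| \leq \tfrac{1}{2}\norm{f}_\infty \norm{P - Q}_1$ for nonnegative $f$, yields
\begin{align*}
    \norm{Q^* - \tilde{Q}^*}_\infty \leq \frac{\gamma\, r_{\max}}{2(1-\gamma)^2}\, \sup_{\tilde{s}, \tilde{a}} \norm{P(\cdot \,|\, \tilde{s}, \tilde{a}) - \tilde{P}^{\pi}(\cdot \,|\, \tilde{s}, \tilde{a})}_1.
\end{align*}

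Using the definition of $\tilde{P}^\pi$ in~(\ref{eqn:P_tilde_pi}), the triangle inequality gives
\begin{align*}
    \norm{P(\cdot \,|\, \tilde{s}, \tilde{a}) - \tilde{P}^\pi(\cdot \,|\, \tilde{s}, \tilde{a})}_1 \leq T_A + T_B,
\end{align*}
where $T_A = \mathbb{E}_{s \sim \nu_\pi(\cdot \,|\, \tilde{s}, \tilde{a})} \norm{P(\cdot \,|\, \tilde{s}, \tilde{a}) - P(\cdot \,|\, s, \tilde{a})}_1$ is the bias from treating the privatized state as the true state, and $T_B = \mathbb{E}_{s \sim \nu_\pi(\cdot \,|\, \tilde{s}, \tilde{a})} \norm{P(\cdot \,|\, s, \tilde{a}) - P_\Mech \circ P(\cdot \,|\, s, \tilde{a})}_1$ is the smoothing from applying the mechanism after the transition. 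For $T_A$, Assumption~\ref{ass:lipschitz_dynamics} reduces the problem to bounding $L\,\mathbb{E}_{s}\norm{\tilde{s} - s}_1$; splitting on the event $\{\norm{\tilde{s}-s}_\infty \leq \alpha\}$ gives contribution at most $K\alpha$ on the good event and at most the (constant) simplex $\ell_1$-diameter times the tail probability $K\exp(-N\alpha\epsilon/(2\sqrt{K}))$ of Theorem~\ref{thm:proj_laplace_util} on the bad event; choosing $\alpha = \Theta(\sqrt{K/N})$ balances these and yields the $K^{3/2}/\sqrt{N}$ and $K\exp(-\sqrt{N}\epsilon)$ contributions. For $T_B$, I would couple $P(\cdot \,|\, s, \tilde{a})$ with $P_\Mech \circ P(\cdot \,|\, s, \tilde{a})$ by drawing $s' \sim P(\cdot \,|\, s, \tilde{a})$ and setting $\tilde{s}' = \Mech(s')$, so that $T_B \leq 2\sup_{s'}\Pa(\Mech(s') \neq s')$; Theorem~\ref{thm:proj_laplace_util} with $\alpha \approx 1/(\sqrt{2}N)$ (just above the $1/N$ discretization of $\St$) produces the $\sqrt{K}\exp(-\epsilon/(2\sqrt{2K}))$ term. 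Summing the three contributions and absorbing constants into $\mathcal{O}$ notation completes the bound.

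The hard part will be justifying the tail bound used in the analysis of $T_A$. Theorem~\ref{thm:proj_laplace_util} controls $\Pa(\norm{S - \Mech(S)}_\infty > \alpha)$ in the marginal distribution over $S$, but the expectation I need is with respect to the posterior $\nu_\pi(s \,|\, \tilde{s}, \tilde{a})$ conditional on the observed privatized state, and via Bayes' rule~(\ref{eqn:bayes_state_dist}) this posterior is shaped by the stationary prior $\nu_\pi(s \,|\, \tilde{a})$, which is determined by the unknown population dynamics and the behaviour policy and so could concentrate adversarially far from $\tilde{s}$. Converting the marginal tail bound into a conditional one that is uniform in $\tilde{s}$ requires exploiting the exponential sharpness of the Laplace likelihood $P_\Mech(\tilde{s} \,|\, s) \propto \exp(-N\epsilon\norm{\tilde{s}-s}_1/2)$ together with the polynomial-in-$N$ cardinality of the discretized simplex $\St$ to dominate the prior factor; this posterior-concentration step is the most delicate part of the argument, and the rest of the proof is then bookkeeping.
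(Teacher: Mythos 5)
Your overall architecture matches the paper's: the Simulation Lemma, then a triangle-inequality split of $\lVert P_{sa} - \tilde{P}^{\pi}_{sa}\rVert_1$ into an input-privatisation (Lipschitz bias) term and an output-privatisation (smoothing) term, with the choice $\alpha = \Theta(\sqrt{K/N})$ balancing the two contributions in the bias term, exactly as in the paper. Two local differences are worth noting. For the smoothing term the paper goes through the Bretagnolle--Huber inequality and a KL bound, arriving at $2\sqrt{1 - \min_{s'} P_{\Mech}(s'\,|\,s')}$, whereas your coupling argument gives $2\bigl(1-\min_{s'}P_{\Mech}(s'\,|\,s')\bigr)$ directly, which is simpler and in fact tighter; either way the work reduces to lower-bounding the diagonal probability $P_{\Mech}(s'\,|\,s')$. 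However, your plan to obtain that bound from Theorem~\ref{thm:proj_laplace_util} with $\alpha \approx 1/(\sqrt{2}N)$ does not quite go through: two distinct states of $\St$ can differ by as little as $1/N$ in $\ell_\infty$, so the event $\{\Mech(s')\neq s'\}$ is not contained in $\{\lVert s'-\Mech(s')\rVert_\infty \geq \alpha + 1/(\sqrt{2}N)\}$ unless $\alpha \leq (1-1/\sqrt{2})/N$, which degrades the constant in the exponent. The paper instead integrates the Laplace density directly over a cube inscribed in the Voronoi cell of $s'$, obtaining $P_{\Mech}(s'\,|\,s') \geq 1 - K\exp(-\epsilon/(2\sqrt{2K}))$; you should do the same rather than route through the tail bound.

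The step you flag as the hard part --- converting the posterior expectation over $\nu_{\pi}(\cdot\,|\,\tilde{s},\tilde{a})$ into something controlled by the forward mechanism --- is indeed the one place where your proposal supplies no argument, and it is where the paper does something specific. It applies Bayes' rule (\ref{eqn:bayes_state_dist}) to write $\nu_{\pi}(s_1\,|\,s,a) = P_{\Mech}(s\,|\,s_1)\,\nu_{\pi}(s_1\,|\,a)/\tilde{\nu}_{\pi}(s\,|\,a)$, bounds the prior-to-marginal ratio by a constant $H$, and then flips $P_{\Mech}(s\,|\,s_1)$ into $P_{\Mech}(s_1\,|\,s)$ at the cost of a second constant $C_{\max} = \max_{s,s_1} P_{\Mech}(s\,|\,s_1)/P_{\Mech}(s_1\,|\,s)$, after which the sum is a genuine forward-mechanism expectation and the concentration of the projected Laplace mechanism applies. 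Both constants are absorbed into the $\bigO$ without tracking their dependence on $N$, $K$, or the behaviour policy, so the paper's resolution is closer to an assumption-by-notation than to the posterior-concentration argument you sketch. Your diagnosis of the difficulty is correct, but as written your $T_A$ bound is incomplete: you must either adopt the paper's constant-absorption step or actually carry out the likelihood-dominates-prior argument, since the prior $\nu_{\pi}(\cdot\,|\,\tilde{a})$ can in principle place all of its mass far from an (unlikely but admissible) observed $\tilde{s}$, and the bound must hold uniformly over $\tilde{s}$.
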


\begin{proof}
\textbf{(Sketch) }
The full proof is given in Appendix \ref{appendix:T1} but we provide a sketch of the main ideas here.

The Simulation Lemma \citep{kearns2002near} can be used to reduce the problem of bounding the Q-value error to the $L_1$ error $\lVert P_{sa} - \tilde{P}_{sa}^{\pi} \rVert_1$ between the transition models. 
Expanding the definition of $\tilde{P}_{sa}^{\pi}$ then allows us to split into two terms: 
\begin{align*}
    \lVert P_{sa} - \tilde{P}_{sa}^{\pi} \rVert_1 \leq
    \sum_{s_1 \in \St} \nu_{\pi}(s_1 \,|\,s, a) \left( \underbrace{\norm{P_{sa} - \bar{P}_{sa}}_1}_{(one)} + \underbrace{\norm{\bar{P}_{sa} - \bar{P}_{s_1 a}}_1}_{(two)} \right),
\end{align*}
where $\bar{P}(s' \,|\,s_1, a) = \sum_{s_2 \in \St} P(s_2 \,|\,s_1, a) P_{\Mech}(s' \,|\,s_2)$. The first term can be viewed as the error due to privatising the output state from the transition model and the second term can be viewed as the error due to privatising the input state to the transition model. 

The first term is bound by first applying the Bretagnolle-Huber inequality.
The KL divergence between $P_{sa}$ and $\bar{P}_{sa}$ can be bound by noting that $\bar{P}_{sa}$ is a convolution between the privacy mechanism and the true transition model. The sum in the convolution can be reduced to a single element, leading to the bound:
\begin{align*}
    \lVert P_{sa} - \bar{P}_{sa}^{\pi} \rVert_1 \leq 2 \sqrt{1 - \min_{s'} P_{\Mech}(s' | s')},
\end{align*}
where $P_{\Mech}(s' | s')$ denotes the probability of the projected laplace mechanism outputting $s'$ when the underlying state is $s'$. 
We can show $P_{\Mech}(s' | s') \geq 1 - K \exp \left( - \frac{\epsilon}{2\sqrt{2K}} \right)$, thus yielding a bound on term (one). 

The second term is first bound by $\tilde{C} \sum_{s_1 \in \St} P_{\Mech}(s_1 \,|\,s) \norm{\bar{P}_{sa} - \bar{P}_{s_1 a}}_1$, where $\tilde{C}$ is a constant.
For $\alpha > 0$, let $B_{\alpha}(s) \coloneqq \{s' \in \St: \norm{s - s'}_\infty < \alpha + 1/\sqrt{2}N \}$ be the $\ell_\infty$ ball centred on state $s$. The sum over $s_1$ is then split into $B_{\alpha}(s)$ and its complement $B^c_{\alpha}(s)$. 
The lipschitz property of our transition model and the concentration property of the projected laplace mechanism then allow us to bound the sum over $B_{\alpha}(s)$ and $B^c_{\alpha}(s)$ respectively. Term (two) is then bound as $\sum_{s_1 \in \St} \nu_{\pi}(s_1 \,|\,s, a) \norm{\bar{P}_{sa} - \bar{P}_{s_1 a}}_1$ by $\bigO\left( 2K \exp\left( - \frac{N \alpha \epsilon}{2 \sqrt{K}} \right) + LK \alpha + \frac{L K}{\sqrt{2}{N}} \right)$. Choosing $\alpha = \frac{2\sqrt{K}}{\sqrt{N}}$ and combining terms then gives the final result.
\end{proof}

Theorem \ref{thm:Q_approx_error} highlights how the approximation error scales as the population sample size $N$ and privacy parameter $\epsilon$ increase. For a given $K$, the approximation error decreases exponentially quickly as $\epsilon$ increases and at a rate of $N^{-\frac{1}{2}}$ as $N$ increases. 
Thus, increasing the sampled population size is an important factor in attaining good quality solutions for RL in population processes. 
Importantly, there are components in the upper bound that depend on only one of $N$ or $\epsilon$. This implies that both quantities must be increased to drive the error completely to zero. The bound also highlights that performance will degrade when the number of statuses of interest, $K$, increases. 
This is likely a function of the fact that the state space is discrete and scales exponentially with the dimension; its possible that formulating the problem with a query whose range is continuous and as a continuous reinforcement learning problem could avoid this issue.
Nevertheless, our theoretical result shows the scaling behaviour of the approximation error and we corroborate this behaviour in our experiments.


\section{Experiments}
\label{sect:experiments}

We present empirical results that corroborate our theoretical findings on the SEIRS Epidemic Control problem detailed in \S~\ref{sect:problem_setting}. We simulate the SEIRS Epidemic Control problem over three large social network graphs from the Stanford Large Graph Network Dataset \citep{snapnets}. 
Table \ref{tab:graph_datasets} lists the details of the datasets used, including the number of nodes and edges in each graph.
These social networks represent reasonable models of social interactions in a population. The state space for each of these models is $\bigO(N^{K})$, and that is computationally challenging for RL algorithms.
\begin{center}
    \begin{table}[ht]
        \caption{Summary of network datasets used in experiments.}
        \label{tab:graph_datasets}
        \centering
        \begin{tabular}{cccc}
        \textbf{Dataset} & \textbf{Name} & \textbf{Nodes} & \textbf{Edges} \\
        \hline
        \texttt{Slashdot} \citep{email-33k:2009} & 82K & 82,168 & 948,464 \\
        \texttt{Gowalla} \citep{Cho:2011} & 196K & 196,591 & 950,327 \\
        \texttt{Youtube} \citep{youtube-1M:2012} & 1.1M & 1,134,890 & 2,987,624
        \end{tabular}
    \end{table}
\end{center}
On each graph, the agent has access to 5 actions $\{\text{Quarantine}(i): i \in \{0, 0.25, 0.5, 0.75, 1.0\}\}$, where $\text{Quarantine}(i)$ quarantines the top $i$th percent of nodes ranked by degree centrality by modifying the interaction matrix in the way described in \S~\ref{sect:problem_setting}. The reward function is given by $r(s_t, a_t) = - (\alpha I(s_t) + (1-\alpha) C(a_t))$
and is taken as a convex combination between two functions $I(s_t)$ and $C(a_t)$. The function $I(s_t)$ returns the proportion of Exposed and Infected individuals at time $t$ and $C(a_t)$ returns the proportion of individuals quarantined by action $a_t$. 

We run experiments varying the population size and the target cumulative privacy budget $\epsilon$. 
For given target cumulative privacy parameters $(\epsilon, \delta)$, we set the per-step privacy budget as 
\begin{align}
    \epsilon' = f(\epsilon, \delta) = \frac{\epsilon}{2 \sqrt{2 T \log(1/\delta)}}, \label{eqn:per_step_eps_formula}
\end{align}
and $\delta' = 0$. 
Under Lemma \ref{lemma:adaptive_comp}, setting the per-step privacy budget in this manner only satisfies the target value of $\epsilon$ for a certain range of values for $\delta$. Figure \ref{fig:DPDQN_results2} plots the curves of $f$ as a function of $\epsilon$ for different $\delta$ and $T = 5e5$ and shows that decreasing the value of $\delta$ increases the gap between the privacy achieved and the target privacy. Thus, decreasing $\delta$ allows for a wider range of values for $\epsilon$ to be achieved. In practice, $\epsilon$ values under 10 are of interest and Figure \ref{fig:DPDQN_results2} shows that this can be achieved for $\delta \leq 10^{-2}$.

Table \ref{tab:trans_obs_params} details the parameters used for each experiment. The parameters $T$ and $N$ denote the number of steps and the sample size respectively. The sample size was taken to be 90\% of the population. The parameter $\alpha$ denotes the weighting used in the reward function. The remaining parameters correspond to the transition rates in the SEIRS epidemic model (see Example \ref{ex:epidemic_control}. A graphical representation of the parameters governing state transitions is shown in Figure \ref{fig:epi}. 
\begin{center}
    \begin{table}[ht]
        \caption{Experiment parameters.}
        \label{tab:trans_obs_params}
        \begin{center}
        \begin{tabular}{ccccccccc}
        \textbf{Experiment} & $T$ & $N$ & $\delta$ & $\alpha$ & $\beta$ & $\sigma$ & $\gamma$ & $\rho$\\
        \hline
        82K & 5e5 & 74K & $10^{-5}$ & 0.8 & 0.2 & 0.3 & 0.1 & 0.01\\
        196K & 5e5 & 171K & $10^{-5}$ & 0.8 & 0.2 & 0.3 & 0.1 & 0.01\\
        1.1M & 5e5 & 1M & $10^{-5}$ & 0.8 & 0.2 & 0.3 & 0.1 & 0.01\\
        \end{tabular}

        \end{center}
    \end{table}
\end{center}
All experiments were performed on a shared server with a 32-Core Intel(R) Xeon(R) Gold 5218 CPU, 192 gigabytes of RAM. A single NVIDIA GeForce RTX 3090 GPU was also used.

The RL algorithm we use is the DQN algorithm \citep{mnih2015DQN} initialized with an experience replay buffer \citep{lin1992reinforcement}. We refer to its differentially private version as DP-DQN. DP-DQN can only store privatized transitions in its replay buffer. The DQN algorithm and the environment interact in an online fashion and exploration is performed using epsilon-greedy. The projected laplace mechanism was used as the state privatisation mechanism. A full description of the DP-DQN algorithm and hyperparameters used is provided in Appendix \ref{appendix:DP_DQN}.

\subsection{Results}
Figures \ref{fig:DPDQN_results2} and \ref{fig:DPDQN_results3} display how the performance of DP-DQN scales as $\epsilon$ increases across each graph over $T = 5\mathrm{e}{5}$ interactions. Each curve displays the mean and standard deviation over five random seeds. The parameter $\delta$ was kept fixed across all runs at $\delta = 10^{-5}$. 
The blue line indicates the default performance of DQN without differential privacy across all graphs.

\begin{figure}[ht]
    \centering
    \resizebox{0.85\linewidth}{!}{\begin{tabular}{cc}
        \includegraphics{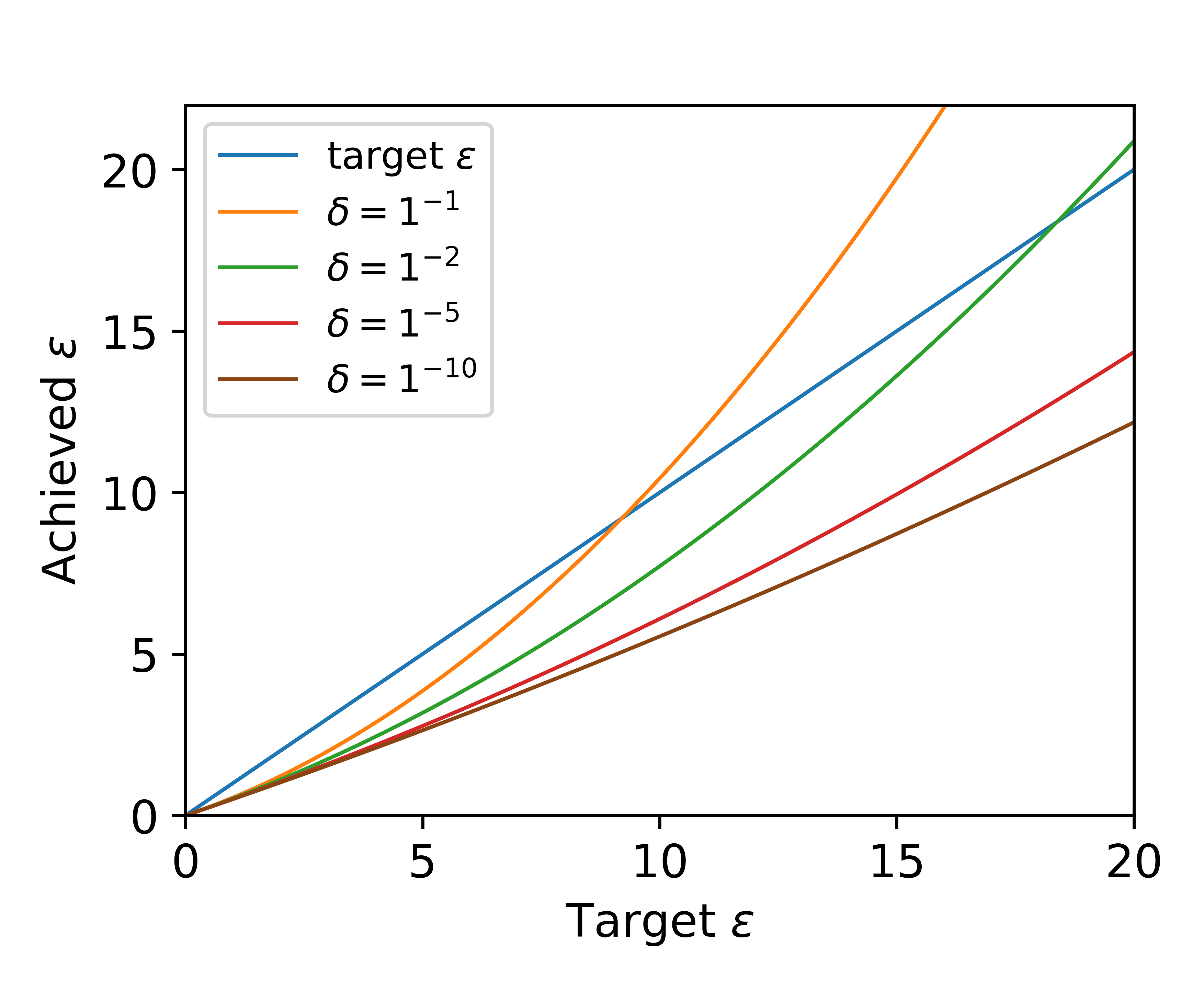} &
        \includegraphics{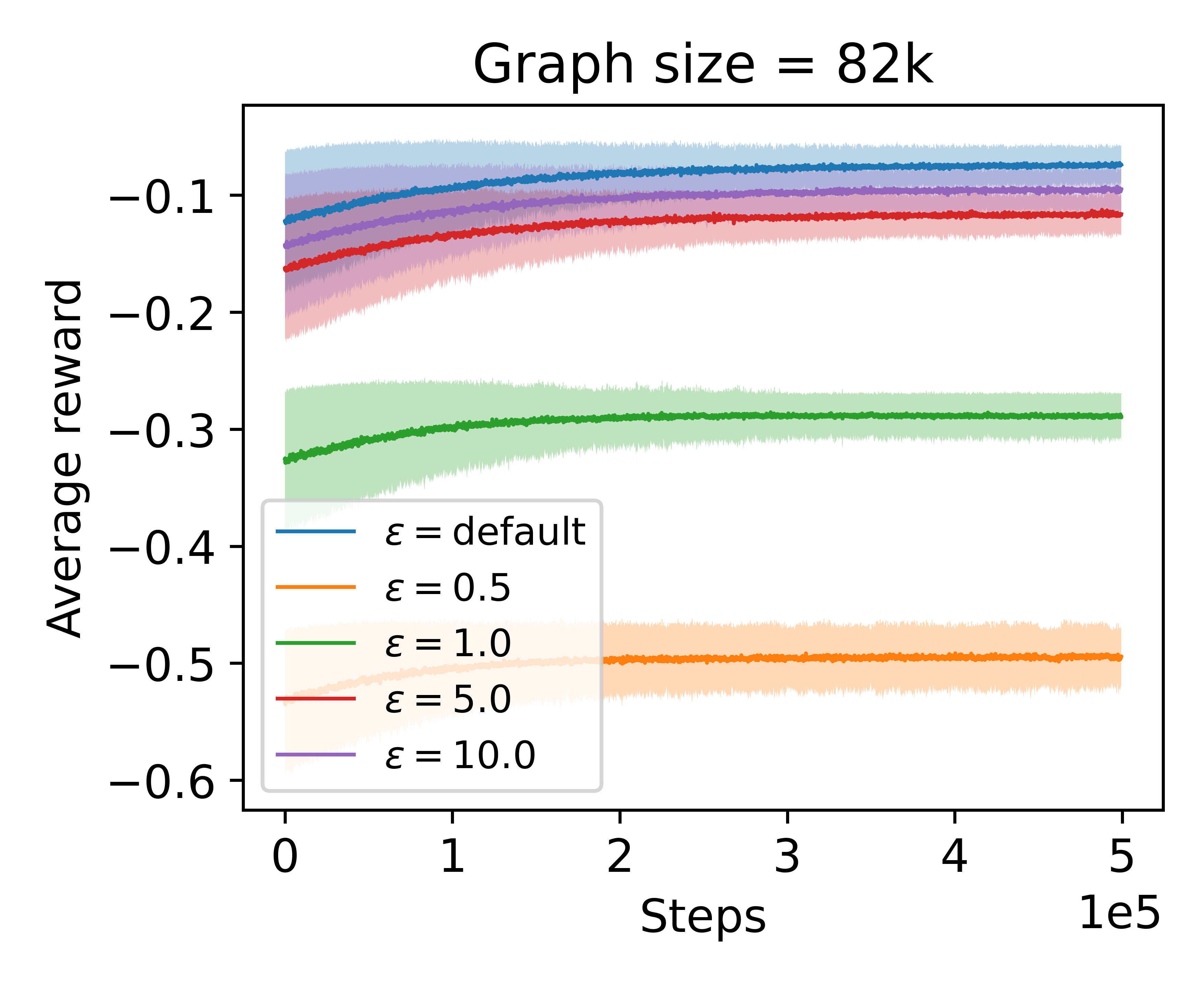}
    \end{tabular}}

    \caption{Left: Target privacy vs privacy achieved under equations (\ref{eqn:adaptive_comp_formula}) and (\ref{eqn:per_step_eps_formula}) as $\delta$ is varied. $T = 5e5$.
    Right: DP-DQN performance as $\epsilon$ is varied on graphs with 82K and 196K nodes. \label{fig:DPDQN_results2}}
\end{figure}

\begin{figure}[ht]
    \centering
    \resizebox{0.85\linewidth}{!}{\begin{tabular}{cc}
        \includegraphics{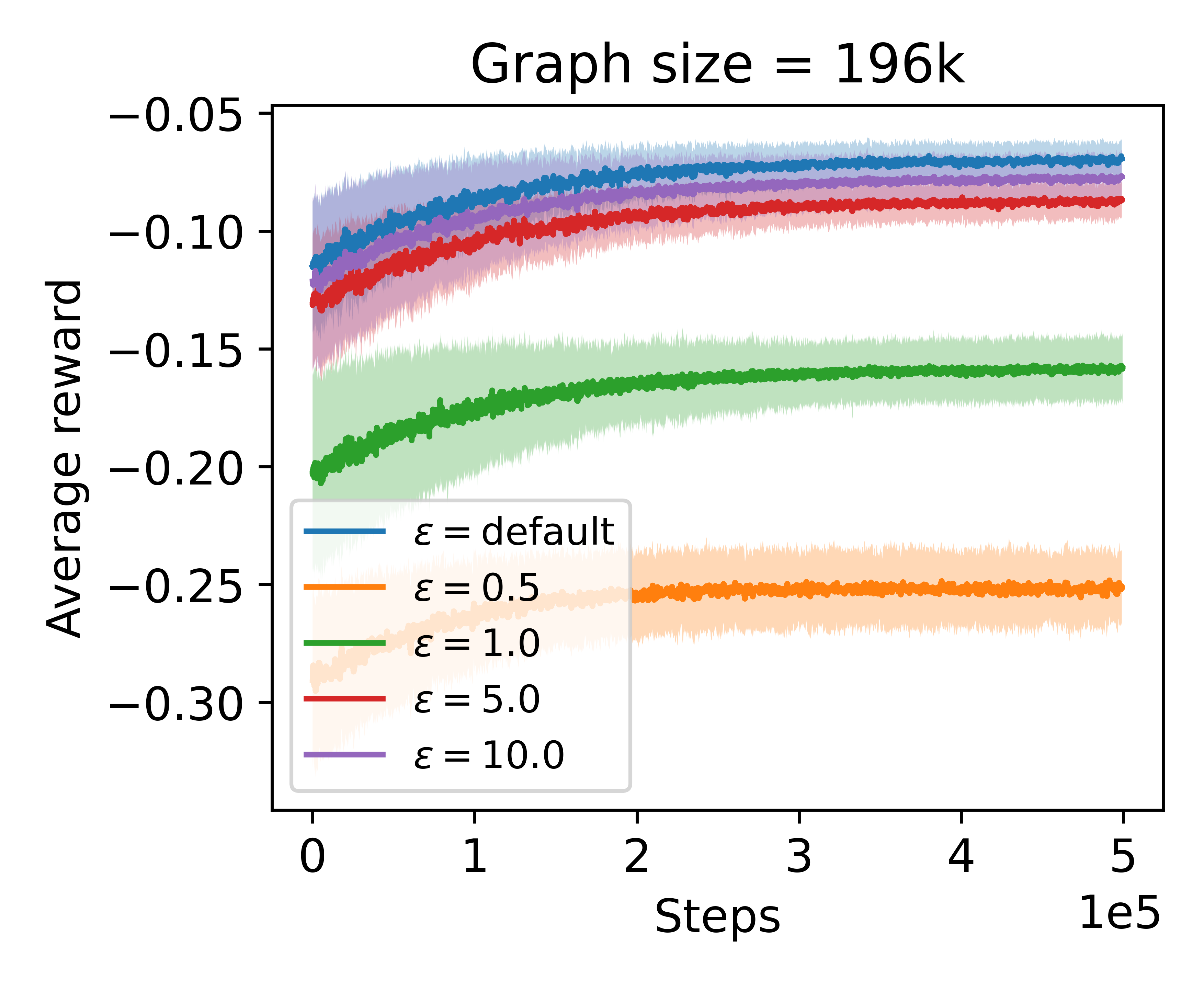} &
        \includegraphics{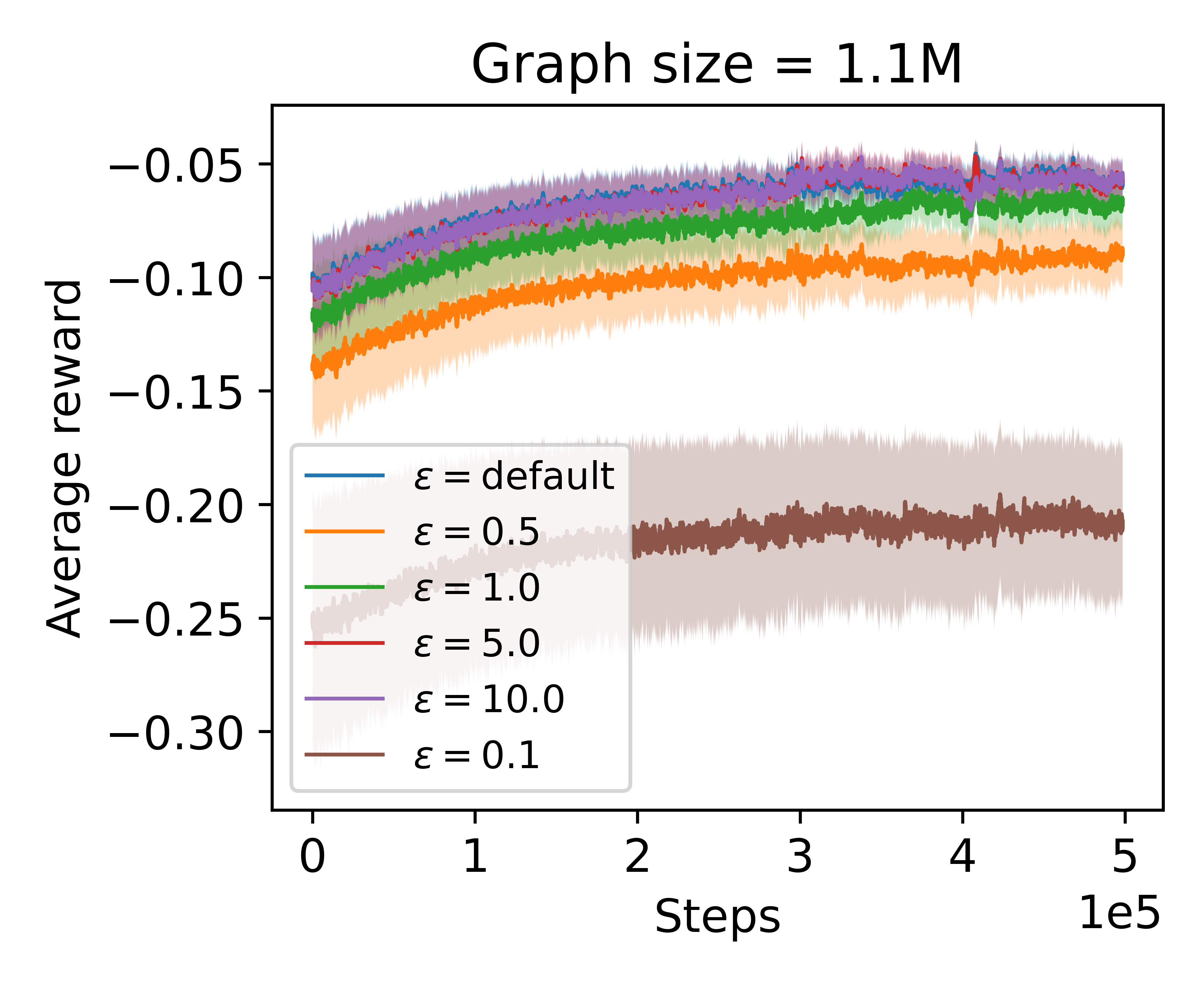}
    \end{tabular}}
    \caption{DP-DQN performance as $\epsilon$ is varied on graphs with 82K and 196K nodes. \label{fig:DPDQN_results3}}
\end{figure}

The empirical results demonstrate in a finite sample setting the relationship between performance and the population size and privacy parameters highlighted by Theorem \ref{thm:Q_approx_error}.
For all graphs, the performance of DP-DQN in a low privacy setting (i.e. $\epsilon \geq 5$) clustered closely to the optimal performance. Notably, DP-DQN's performance under low privacy on the 1.1M graph is essentially indistinguishable to the default performance.
Across all graphs, the performance of DP-DQN degrades in a high privacy setting (i.e. $\epsilon < 1$). The effect is especially pronounced for the small 81k graph. 
The performance of DP-DQN under $\epsilon = 0.5$ in the 1.1M graph is quite close to the default performance. This falls in line with intuition as the noise added under a given privacy parameter has absolute scale. As the population size increases, the relative error due to privacy is much smaller. Assuming the environment transition function and reward function are Lipschitz, the small relative error due to privacy would not impact performance in a large population as much as it would in a smaller population.
On the 1.1M graph we also plot the performance of DP-DQN under $\epsilon = 0.1$. Again, this is made possible due to the increased population size.


\section{Limitations and Future Work}

One limitation of our theoretical results is that they are asymptotic in nature. This analysis provides guarantees on the error between the solution under privacy and the true solution without privacy, but does not provide any guidance on whether such a solution can be learned. Nevertheless, our empirical results provide some confidence the solutions found by learning algorithms will scale the way our results predict. Also, understanding the asymptotic properties of a problem setting is an important first step. Constructing and proving sublinear-regret algorithms in our problem setting is important future work.

Whilst our approach to achieving differential privacy is desirable as it is agnostic to the RL algorithm itself, it leaves open the possibility that the same privacy guarantees could be achieved by directly modifying an RL algorithm. Such an approach may be able to better deal with higher dimensional state spaces, which can adversely impact the approximation error, and is one of the limitations of our approach. Our analysis is also performed under some regularity assumptions; removing these assumptions would provide more generally applicable theoretical guarantees and is an interesting topic for future research.

\subsection*{Impact Statement}
As reinforcement learning algorithms see wider adoption and begin interacting with humans and their data, issues around protecting privacy become more pertinent. Our work can be seen as providing a promising start and solid theoretical foundation to providing privacy protections in an important problem class where reinforcement learning may be applied in the future.

\subsection*{Acknowledgements}
The authors would like to thank Mike Purcell for the many helpful discussions and whiteboard sessions during the development of the work.

\bibliography{main}
\bibliographystyle{tmlr}

\newpage
\appendix
\section*{Appendix}

\section{Proof of Lemma~\ref{thm:PF_equiv}}

\label{appendix:T2}
\begin{T2}
    A family of mechanisms $\mathcal{F}$ satisfies $(\epsilon, \delta)$-differential privacy under $T$-fold adaptive composition iff every sequence of mechanisms $\Mech = (\Mech_1, \ldots, \Mech_T)$, with $\Mech_i \in \mathcal{F}$, satisfies $(\epsilon, \delta)$-Pufferfish privacy with parameters $(\mathbb{S}, \mathbb{Q}, \Theta)$ as defined in Section~\ref{sec:formalizing_privacy}.
\end{T2}

\begin{proof}

    \textbf{We first show $T$-fold adaptive composition implies $(\epsilon, \delta)$-Pufferfish privacy.}

    Suppose $\Mech = (\Mech_1, \ldots, \Mech_T)$ satisfies $(\epsilon, \delta)$ differential privacy under $T$-fold adaptive composition.
    Let $\mathfrak{D}_{1:T}$ be a random variable denoting the sequence of databases. 
    Given an arbitrary $\theta$, secret $\sigma_{i,S}$, we have for each non-empty $R \subseteq S$, 
    \begin{equation}
        \begin{aligned}[b]
            &P(\Mech(\mathfrak{D}_{1:T}) = y_{1:T} \,|\, \sigma_{(i,S)}, \theta)\\
            &= \sum_{G_{1:T}} \sum_{X_{1:T}} \sum_{D_{1:T} \,|\,\sigma_{(i,S)}} P(G_{1:T}, X_{1:T} \,|\, \theta) P(\mathfrak{D}_{1:T} = D_{1:T} \,|\, G_{1:T}, X_{1:T}, \sigma_{(i,S)}, \theta) P(\Mech(D_{1:T}) = y_{1:T})  \\ 
            &\leq \delta + e^{\epsilon} \sum_{G_{1:T}} \sum_{X_{1:T}} \sum_{D_{1:T} \,|\,\sigma_{(i,S)}} P(G_{1:T},X_{1:T} \,|\, \theta) P(\mathfrak{D}_{1:T} = D_{1:T} \,|\, G_{1:T}, X_{1:T}, \sigma_{(i,S)}, \theta) P(\Mech(D'_{1:T}) = y_{1:T}),  
            \label{eqn:DP_implies_PF}
        \end{aligned}
    \end{equation}
    where $D'_{1:T}$ is obtained from $D_{1:T}$ by removing individual $i$'s data from $D_t$, $t \in R$. 
    The inequality (\ref{eqn:DP_implies_PF}) follows from the property of T-fold adaptive composition: 
    \begin{align}
        \max_{\omega: P(\Mech(D_{1:T}) = \omega) \geq \delta} \ln \frac{ P(\Mech(D_{1:T}) = \omega) - \delta }{ P(\Mech(D'_{1:T}) = \omega) } \leq \epsilon. \label{eqn:k_fold1}
    \end{align}
        
    Given a $D_{1:T}$ satisfying $\sigma_{(i,S)}$, we have 
    \begin{align*}
        P(\mathfrak{D}_{1:T} = D_{1:T} &\,|\, G_{1:T}, X_{1:T}, \sigma_{(i,S)}, \theta)
        = \biggl[\, \prod_{t \in S} P(\mathfrak{D}_t = D_t \,|\, X_{t}, i \in L_t, \theta) \biggr]
        \biggl[\, \prod_{t \in T\setminus S} P(\mathfrak{D}_t = D_t \,|\, X_{t}, i \notin L_t, \theta) \biggr].
    \end{align*}
    Given $D'_{1:T}$ is obtained from $D_{1:T}$ as described above, it satisfies $\sigma_{(i, S\setminus R)}$ and we have
    \begin{align*}
        P(&\mathfrak{D}_{1:T} = D'_{1:T} \,|\, G_{1:T}, X_{1:T}, \sigma_{(i,S\setminus R)}, \theta)
        = \\ & \biggl[\, \prod_{t \in R} P(\mathfrak{D}_t = D'_t \,|\, X_{t}, i\neq L_t, \theta) \biggr]
        \biggl[\, \prod_{t \in S \setminus R} P(\mathfrak{D}_t = D_t \,|\, X_{t}, i \in L_t, \theta) \biggr]
        \biggl[\, \prod_{t \in T\setminus S} P(\mathfrak{D}_t = D_t \,|\, X_{t}, i\notin L_t, \theta) \biggr].
    \end{align*}
    For each $t \in R$, we have
    \begin{equation}\label{eqn:P(D'_t) = P(D_t)}
        P(\mathfrak{D}_t = D_t \,|\, X_t, i\in L_t, \theta) = P(\mathfrak{D}_t = D'_t \,|\, X_t, i \notin L_t, \theta)
    \end{equation}
    since, by Equation (\ref{eq:data generation}), both the LHS and RHS of (\ref{eqn:P(D'_t) = P(D_t)}) are equal to
    \[ \prod_{j \in L_t \setminus \{i\}} \mu_{t,j} \prod_{j \in [N^*] \setminus L_t \}} (1 - \mu_{t,j}). \]
    We have thus established that 
    \begin{equation}\label{eqn:replace_PF}
        P(\mathfrak{D}_{1:T} = D_{1:T} \,|\, G_{1:T}, X_{1:T}, \sigma_{(i,S)}, \theta) = P(\mathfrak{D}_{1:T} = D'_{1:T} \,|\, G_{1:T}, X_{1:T}, \sigma_{(i,S \setminus R)}, \theta).
    \end{equation}
    Substituting Equation (\ref{eqn:replace_PF}) into (\ref{eqn:DP_implies_PF}), and noting that the innermost summation $\sum_{D_{1:T} \,|\,\sigma_{(i,S)}}$ in (\ref{eqn:DP_implies_PF}) can be rewritten in the equivalent form $\sum_{D'_{1:T} \,|\,\sigma_{(i,S \setminus R)}}$, allows us to claim
    \begin{align*}
        P(\Mech(\mathfrak{D}_{1:T}) = y_{1:T} \,|\, \sigma_{(i,S)}, \theta)
        \leq \delta + e^\epsilon P(\Mech(\mathfrak{D}_{1:T}) = y_{1:T} \,|\, \sigma_{(i,S \setminus R)} , \theta).
    \end{align*}
    Given $\theta \in \Theta, i \in [N^*], S \subseteq [T]$ and $R \subseteq S$ are all arbitrary, we have shown $\Mech$ satisfies $(\epsilon, \delta)$-Pufferfish privacy with parameters $(\mathbb{S},\mathbb{Q},\Theta)$.

    \textbf{We next show $(\epsilon, \delta)$-Pufferfish privacy implies $T$-fold adaptive composition.}

    Suppose $\Mech = (\Mech_1, \ldots, \Mech_T)$ satisfies $(\epsilon, \delta)$ Pufferfish privacy with parameters $(\mathbb{S},\mathbb{Q},\Theta)$.
    We show for any pair of neighbouring databases there is a $\theta \in \Theta$ that preserves differential privacy.

    Let $D_{1:T}$ be an arbitrary sequence of datasets.
    For each individual $i$ in $D_{1:T}$, let $D'_{1:T}$ be obtained from $D_{1:T}$ by removing individual $i$'s data from one or more of the datasets.
    Thus, there exists $S$ and $R \subseteq S$ such that $D_{1:T}$ satisfies $\sigma_{(i,S)}$ and $D'_{1:T}$ satisfies $\sigma_{(i,S\setminus R)}$.

    We choose $\theta = \{ (\mathscr{E}, \mu_{t,1}, \ldots, \mu_{t,N^*} \}_{t=1..T}$ to be the following. 
    $\mathscr{E}$ can be any stochastic population process.
    For each $t \in S$, define $\mu_{t,i} = 1/2$, and $\mu_{t,j} = 1$ for each $j \in L_t \setminus S$ and $\mu_{t,k} = 0$ for each $k \notin L_t$.
    And for each $t \notin S$, define $\mu_{t,j} = 1$ for each $j \in L_t$ and $\mu_{t,k} = 0$ for each $k \notin L_t$.
    Given $\Mech$ satisfies $(\epsilon,\delta)$ Pufferfish privacy, we have for any $y_{1:T}$:
    \begin{align}
        & P(\Mech(\mathfrak{D}_{1:T}) = y_{1:T} \,|\, \sigma_{(i,S)}, \theta) 
        \leq e^{\epsilon} P(\Mech(\mathfrak{D}_{1:T}) = y_{1:T} \,|\, \sigma_{(i,S\setminus R)}, \theta) + \delta  \label{eqn:pufferfish to DP 1} \\
       \Leftrightarrow\, & P(\Mech( D_{1:T}) = y_{1:T} \,|\, \sigma_{(i,S)}, \theta) 
        \leq e^{\epsilon} P(\Mech(D'_{1:T}) = y_{1:T} \,|\, \sigma_{(i,S\setminus R)}, \theta) + \delta  \label{eqn:pufferfish to DP 2} \\
       \Leftrightarrow\, & P(\Mech( D_{1:T}) = y_{1:T} \,|\, \theta) 
        \leq e^{\epsilon} P(\Mech(D'_{1:T}) = y_{1:T} \,|\, \theta) + \delta  \label{eqn:pufferfish to DP 3} 
    \end{align}
    Step (\ref{eqn:pufferfish to DP 2}) follows because of all the dataset sequences that can be generated using $\theta$, only $D_{1:T}$ satisfies $\sigma_{(i,S)}$ and only $D'_{1:T}$ satisfies $\sigma_{(i,S\setminus R)}$.
    Step (\ref{eqn:pufferfish to DP 3}) follows because $P(\sigma_{(i,S)} \,|\, \theta) = P(\sigma_{(i,S\setminus R)} \,|\, \theta) = (1/2)^{|S|}$.

    Choosing $\theta \in \Theta$ in this manner for all neighbouring database sequences and repeating the calculations proves the final result.
\end{proof}

In the proof of Lemma~\ref{thm:PF_equiv}, we have only considered neighbouring datasets where an individual's data is dropped.
The case when a neighbouring dataset $D'$ is obtained from a dataset $D$ by replacing an individual $i$'s data with another individual $j$'s data, where $j$ is not in $D$, can be formulated using the following secrets and secret pairs:   
\begin{align*}
    \sigma_{(i,S)} &\coloneqq \biggl(\, \bigwedge_{t \in S} \, i \in L_t \biggr) \wedge \biggl(\, \bigwedge_{t \in [T] \setminus S} i \notin L_t \biggr) \\
    \sigma_{(i,S(W))} &\coloneqq  \biggl(\, \bigwedge_{t \in S \setminus W_{|1}} \, i \in L_t \biggr) \wedge \biggl(\, \bigwedge_{t \in [T] \setminus (S\setminus W_{|1})} i \notin L_t \biggr) \wedge \biggl(\, \bigwedge_{(t,j) \in W} j \in L_t \biggr) \\
    \mathbb{S'} &\coloneqq \bigcup_{i \in [N^*]} \bigcup_{S \subseteq [T] : |S| \geq 1} \{ \sigma_{(i,S)} \} \\
    \mathbb{Q'} &\coloneqq \bigcup_{i \in [N^*]} \bigcup_{S \subseteq [T] : |S| \geq 1} \, \bigcup_{ \substack{ W = \{ (t,j) \,:\, t \in R, j \in [N^*] \setminus S \} \\  R \subseteq S \wedge |R| \geq 1} } \{ (\sigma_{(i,S)}, \sigma_{(i, S(W))} \}
\end{align*}
In the above, given $W = \{ (t,x) \}$, we define $W_{|1} = \{ t \,:\, \exists x.(t,x) \in W \}$.
Using very similar arguments, one can show that $(\epsilon,\delta)$ differential privacy under T-fold adaptive composition is equivalent to $(\epsilon,\delta)$ Pufferfish privacy with parameters $(\mathbb{S'},\mathbb{Q'},\Theta)$.

For each of the secret pairs $(s_1,s_2) \in \mathbb{Q'}$, the Pufferfish privacy guarantees that 
\begin{equation}
    e^{-\epsilon} \leq  \frac{P( s_1 \,|\, \Mech(\mathfrak{D}_{1:T}) = \omega, \theta) } { P( s_2 \,|\, \Mech(\mathfrak{D}_{1:T}) = \omega, \theta)} \bigg/ \frac{ P( s_1 \,|\, \theta) }{ P( s_2 \,|\, \theta) }  \leq e^\epsilon.
\end{equation}

By inspecting the proof of Lemma~\ref{thm:PF_equiv}, it is clear that secret pairs that represent an individual having two different status at a given time cannot be accommodated, and this is consistent with Example~\ref{ex:flu_spread}.


\section{Utility Analysis Proofs}
\label{appendix:T1}

\subsection{Laplace and Projected Laplace Mechanism Properties}

The mechanism we consider using is the projected laplace mechanism. Denote by $\Mech_L$ the laplace mechanism which outputs ${s' = \Mech_L(s) = s + (Y_1, \ldots, Y_K)}$, $Y_i \sim Lap(\Delta q / \epsilon)$. The projected laplace mechanism outputs $\tilde{s} = \Mech(s)$ by first applying the laplace mechanism $s' = \Mech_L(s)$ and then takes the $\ell_2$ projection back onto the state space, i.e. $\tilde{s} = \arg\min_{\tilde{s} \in \St} \norm{\tilde{s} - s'}_2$. 

A concentration bound for the Laplace mechanism is given as follows.

\begin{theorem}[\citet{dwork2014algorithmic}]
    \label{eqn:dwork_laplace_util}
    Let $X_t = (X_{t, i})_{i \in L_t}$ denote a dataset, $s = q(X_t)$ and $s' = \Mech_L(s)$. For $\epsilon > 0$ and $\delta \in (0, 1]$,
    \begin{align*}
        \Pa \left[ \norm{s - s'}_\infty \geq \ln \left( \frac{K}{\delta} \right) \cdot \left( \frac{\Delta q}{\epsilon} \right) \right] \leq \delta.
    \end{align*}
\end{theorem}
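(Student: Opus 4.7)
My approach would be to combine a triangle inequality decomposition of the total error with the Laplace tail bound from Theorem~\ref{eqn:dwork_laplace_util}, then exploit the lattice structure of $\St$ to control the projection overhead. Let $s' = s + \eta$ denote the pre-projection noisy state, so $\tilde{s}$ is the $\ell_2$ projection of $s'$ onto $\St$. I would start from
$$\norm{s - \tilde{s}}_\infty \leq \norm{s - s'}_\infty + \norm{s' - \tilde{s}}_\infty$$
and control each term separately, treating the first as a pure Laplace concentration problem and the second as a lattice geometry problem.

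For the first term, I would apply Theorem~\ref{eqn:dwork_laplace_util} with sensitivity $\Delta q = 2/N$. To match the target exponent, I would choose the failure probability as $\delta = K\exp(-N\alpha\epsilon/(2\sqrt{K}))$, which rearranges to $\ln(K/\delta)\,(\Delta q/\epsilon) = \alpha/\sqrt{K}$. Thus with probability at least $1 - K\exp(-N\alpha\epsilon/(2\sqrt{K}))$ we have $\norm{s - s'}_\infty < \alpha/\sqrt{K}$, which by the standard $\ell_\infty$-to-$\ell_2$ conversion also gives $\norm{s - s'}_2 < \alpha$.

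For the second term, I would use two structural facts about $\St$. First, since $s \in \St$ is a valid candidate for the nearest lattice point to $s'$, the optimality of $\tilde{s}$ yields $\norm{s' - \tilde{s}}_2 \leq \norm{s' - s}_2$. Second, any two distinct points in $\St$ differ by a vector of $1/N$-multiples summing to zero, so at least two entries must be nonzero of magnitude $\geq 1/N$, giving minimum pairwise $\ell_2$ distance $\sqrt{2}/N$. The half of this, $1/(\sqrt{2}N)$, acts as a Voronoi-inradius: whenever $\norm{s' - s}_2 < 1/(\sqrt{2}N)$ the projection satisfies $\tilde{s} = s$ exactly, while in the complementary regime the triangle inequality combined with the projection optimality controls how far $\tilde{s}$ can drift. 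This is the step from which the additive $1/(\sqrt{2}N)$ in the threshold emerges.

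Putting the pieces together, conditioning on the high-probability event from Theorem~\ref{eqn:dwork_laplace_util} and combining the triangle inequality with the projection bound gives the stated tail inequality. The main obstacle I anticipate is carefully tracking the norm conversions between $\ell_\infty$ and $\ell_2$ (noting that $\norm{\cdot}_\infty \leq \norm{\cdot}_2 \leq \sqrt{K}\norm{\cdot}_\infty$) so that the final exponent comes out as exactly $-N\alpha\epsilon/(2\sqrt{K})$ and the additive lattice term as exactly $1/(\sqrt{2}N)$, rather than coarser constants produced by a naive $\norm{s - \tilde{s}}_2 \leq 2\norm{s - s'}_2$ bound; handling the interaction between the noise magnitude and the discrete lattice rounding in a tight way is the delicate part of the argument.
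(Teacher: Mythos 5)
Your proposal does not prove the statement in question; it proves a different result. The statement here is the tail bound for the \emph{un-projected} Laplace mechanism $s' = \Mech_L(s)$, cited in the paper from \citet{dwork2014algorithmic} and used without further proof, whereas your entire argument is aimed at the \emph{projected} Laplace mechanism $\tilde{s} = \Mech(s)$ --- that is, at Theorem~\ref{thm:proj_laplace_util}. Moreover, your very first step is to ``apply Theorem~\ref{eqn:dwork_laplace_util}'' with a particular choice of $\delta$, i.e.\ you invoke the statement you were asked to prove as a known ingredient, so as a proof of \emph{that} statement the argument is circular. None of the machinery you develop (the triangle inequality through $s'$, the minimum lattice spacing $\sqrt{2}/N$ of $\St$ and the resulting additive $1/(\sqrt{2}N)$, the $\ell_\infty$/$\ell_2$ conversions) is needed for the actual claim, which involves no projection at all.

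The intended argument is elementary: $s' - s = \eta$ with $\eta_1,\dots,\eta_K$ i.i.d.\ $\text{Lap}(b)$ where $b = \Delta q/\epsilon$, and a single Laplace variable satisfies $\Pa\left(\abs{\eta_i} \geq t b\right) = e^{-t}$ for $t > 0$. Taking $t = \ln(K/\delta)$ gives $\Pa\bigl(\abs{\eta_i} \geq \ln(K/\delta)\, b\bigr) = \delta/K$ for each coordinate, and a union bound over the $K$ coordinates yields $\Pa\bigl(\norm{s - s'}_\infty \geq \ln(K/\delta)\cdot\Delta q/\epsilon\bigr) \leq \delta$. For what it is worth, the projection analysis you sketch is broadly in the spirit of how the paper later proves Theorem~\ref{thm:proj_laplace_util}, though the paper obtains the $1/(\sqrt{2}N)$ term by comparing the projection onto $\St$ with the projection onto the simplex $\Delta_K$ rather than by a Voronoi-inradius argument; but that is a different theorem from the one you were asked to prove.
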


The following is a simple corollary of Theorem \ref{eqn:dwork_laplace_util}.

\begin{corollary}
    \label{cor:laplace_util}
    Let $X_t = (X_{t, i})_{i \in L_t}$ denote a dataset and let $s = q(X_t)$ and $s' = \Mech_L(s)$. Then for $\epsilon > 0$ and $\alpha > 0$,
    \begin{align*}
        \Pa \left( \norm{s - s'}_\infty \geq \alpha \right) \leq K \exp \left( - \frac{N \alpha \epsilon}{2} \right).
    \end{align*}
\end{corollary}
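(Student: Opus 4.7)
The plan is to prove this as a direct reparameterization of Theorem~\ref{eqn:dwork_laplace_util}, exploiting the fact that the sensitivity of the histogram query is $\Delta_q = 2/N$, as established in \S~\ref{sec:prob_model}. I will choose $\delta$ in the statement of Theorem~\ref{eqn:dwork_laplace_util} so that the threshold on the right-hand side of its probability bound equals the target threshold $\alpha$, and then read off the conclusion.

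Concretely, I would set
\begin{equation*}
    \delta = K \exp\!\left( - \frac{N \alpha \epsilon}{2} \right),
\end{equation*}
and verify algebraically that this produces $\ln(K/\delta) \cdot (\Delta_q/\epsilon) = \ln(K/\delta) \cdot (2/(N\epsilon)) = \alpha$. Applying Theorem~\ref{eqn:dwork_laplace_util} with this choice of $\delta$ then gives exactly $\Pa(\|s - s'\|_\infty \geq \alpha) \leq \delta = K\exp(-N\alpha\epsilon/2)$, which is the desired inequality.

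The only subtlety is that Theorem~\ref{eqn:dwork_laplace_util} requires $\delta \in (0,1]$, whereas our chosen $\delta$ can exceed $1$ when $\alpha$ or $\epsilon$ is small relative to $\ln K$. I would handle this by splitting into two cases: if $K\exp(-N\alpha\epsilon/2) \leq 1$, the reparameterization above applies directly; otherwise the right-hand side of the corollary is at least $1$, and the bound holds trivially since probabilities are bounded above by $1$. No other obstacles arise, as the derivation is a straightforward change of variables in an already-stated concentration bound.
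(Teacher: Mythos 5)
Your proposal is correct and matches the paper's intent exactly: the paper states this as a ``simple corollary'' of Theorem~\ref{eqn:dwork_laplace_util} without spelling out the substitution, and your reparameterization $\delta = K\exp(-N\alpha\epsilon/2)$ together with $\Delta_q = 2/N$ is precisely the intended derivation. Your explicit handling of the case $\delta > 1$ is a small point of care the paper omits, but it does not change the argument.
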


We now prove Theorem \ref{thm:proj_laplace_util}, a concentration bound for the projected laplace mechanism.

\begin{T3}
    Let $X_t = (X_{t, i})_{i \in L_t}$ denote a dataset and let $s = q(X_t)$ and $\tilde{s} = \Mech(s)$, where $\Mech$ is the Projected Laplace mechanism. Then for all $\epsilon > 0$ and $\alpha > 0$,
    \begin{align*}
        \Pa \left( \norm{s - \tilde{s}}_\infty \geq \alpha + \frac{1}{\sqrt{2}N} \right) \leq K \exp \left( - \frac{N \alpha \epsilon}{2 \sqrt{K}} \right).
    \end{align*}
\end{T3}

\begin{proof}
    First, define $P_{\Delta_K}(s) = \arg\min_{s' \in \Delta_K} \norm{s' - s}_2$ and $P_{\St}(s) = \arg\min_{s' \in \St} \norm{s' - s}_2$ as the $\ell_2$ projections onto the $K-1$ probability simplex $\Delta_K$ and the state space $\St$, noting that $\St \subset \Delta_K$. 
    Letting $s' = \Mech_L(s)$, we can express $\tilde{s}$ as $\tilde{s} = \Mech(s) = P_{\St}(s')$. 

    Rather than finding an upper bound on $\Pa(\norm{s - \tilde{s}}_\infty \geq \alpha)$, we instead find a lower bound on $\Pa( \norm{s - \tilde{s}}_\infty < \alpha)$. Note that the $\ell_\infty$ ball of radius $\alpha$ (actually a hypercube) has greater volume than the $\ell_2$ ball of radius $\alpha$. Thus,
    \begin{align}
        \Pa \left( \norm{s - \tilde{s}}_\infty < \alpha \right) \geq \Pa\left( \norm{s - \tilde{s}}_2 < \alpha \right). \label{eqn:plu1}
    \end{align}

    Now define the following sets:
    \begin{align*}
        A &\coloneqq \left\{ s' \in \R^{K}: \norm{s - P_{\St}(s')}_2 < \alpha + \frac{1}{\sqrt{2} N} \right\}\\
        B &\coloneqq \left\{ s' \in \R^{K}: \norm{s - P_{\Delta_K}(s')}_2 < \alpha \right\}
    \end{align*}
    
    Note that $P_{\St}(s')$ must be a point that minimizes the $\ell_2$ distance to $P_{\Delta_K}(s')$, as $P_{\Delta_K}(s')$ is the minimum distance to the simplex $\Delta_K$ and minimizing the distance to $\St$ thus involves minimizing the distance along $\Delta_K$ from $P_{\Delta_K}(s')$. Also for two neighbouring points $s_1, s_2 \in \St$ (i.e. $s_1 \neq s_2$ and closest in $\ell_2$ distance), we must have two dimensions $i \neq j$ where $s_{1, i} = s_{2, i} - \frac{1}{N}$ and $s_{1, j} = s_{2, i} + \frac{1}{N}$ and $s_{1, k} = s_{2, k}$ for $k \neq i, j$. Thus the minimum distance between two neighbouring points is $\norm{s_1 - s_2}_2 = \frac{\sqrt{2}}{N}$ and $P_{\St}(s')$ can be at most $\frac{1}{\sqrt{2}N}$ away from $P_{\Delta_K}(s')$. Then for $s' \in B$, we have:
    \begin{align*}
        \norm{s - P_{\St}(s')}_2 &\leq \norm{s - P_{\Delta_K}(s')}_2 + \norm{P_{\Delta_K}(s') - P_{\St}(s')}_2\\
        &\leq \alpha + \frac{1}{\sqrt{2}N}.
    \end{align*}
    Thus $s' \in B \implies s' \in A$, implying $B \subseteq A$ and giving us $\Pa(A) \geq \Pa(B)$ or equivalently:
    \begin{align}
        \Pa \left(\norm{s - P_{\St}(s')}_2 < \alpha + \frac{1}{\sqrt{2}N} \right) \geq
        \Pa \left( \norm{s - P_{\Delta_K}(s')}_2 < \alpha \right). \label{eqn:plu2}
    \end{align}
    Let $\theta \in [0, \frac{\pi}{2}]$ denote the angle between $s-s'$ and the plane $\Delta_K$. 
    Since $P_{\Delta_K}(s')$ is the orthogonal projection, we have $\norm{s - s'}_2 = \norm{s - P_{\Delta_K}(s')}_2 \cos^{-1}(\theta)$. Then we have:
    \begin{align}
        \Pa \left( \norm{s - P_{\Delta_K}(s')}_2 < \alpha \right) &= \Pa \left( \norm{s - s'}_2 < \alpha \cos^{-1}(\theta) \right) \nonumber \\
        &\overset{(a)}{\geq} \Pa \left(\norm{s - s'}_\infty < \frac{\alpha \cos^{-1}(\theta)}{\sqrt{K}} \right) \nonumber\\
        &= 1 - \Pa \left(\norm{s - s'}_\infty \geq \frac{\alpha \cos^{-1}(\theta)}{\sqrt{K}} \right) \nonumber \\
        &\overset{(b)}{\geq} 1 - K \exp\left( - \frac{N \alpha \epsilon \cos^{-1}(\theta)}{2 \sqrt{K}} \right), \label{eqn:plu3}
    \end{align}

    where (a) follows as $\{s' \in \R^{K}: \norm{s - s'}_\infty \leq \alpha / \sqrt{K} \} \subseteq \{ s' \in \R^K : \norm{s - s'}_2 \leq \alpha \}$ -- the former is a hypercube with side length $2 \alpha / \sqrt{K}$ sitting completely inside the $\ell_2$ ball of radius $\alpha$ -- 
    and (b) follows by Corollary \ref{cor:laplace_util}. 

    Combining (\ref{eqn:plu1}), (\ref{eqn:plu2}) and (\ref{eqn:plu3}) then gives us:
    \begin{align*}
        \Pa \left( \norm{s - \tilde{s}}_\infty < \alpha + \frac{1}{\sqrt{2}N} \right) \geq 1 - K \exp\left( - \frac{N \alpha \epsilon}{2 \sqrt{K}} \right),
    \end{align*}

    where we drop $\cos^{-1}(\theta)$ as $\cos^{-1}(\theta) \in [1, \infty]$ for $\theta \in [0, \frac{\pi}{2}]$. Thus, we have:
    \begin{align*}
        \Pa \left( \norm{s - \tilde{s}}_\infty \geq \alpha + \frac{1}{\sqrt{2}N}\right) &= 1 - \Pa \left( \norm{s - \tilde{s}}_\infty < \alpha + \frac{1}{\sqrt{2}N}\right)\\
        &\leq K \exp\left( - \frac{N \alpha \epsilon}{2 \sqrt{K}} \right).
    \end{align*}
\end{proof}

\subsection{Additional Lemmas}

The Simulation Lemma \cite{kearns2002near,agarwal2022rltheorybook} lets us bound the value function error in terms of the error in the transition functions. 
\begin{lemma}[Simulation Lemma]
    \label{lemma:simulation_lemma}
    Let $M = (\St, \A, r, P, \gamma)$ and $\tilde{M} = (\St, \A, r, \tilde{P}, \gamma)$ be two MDPs that differ only in the transition model.
    Given a policy $\pi$, let $Q^{\pi}$ be the value function under $\pi$ in $M$ and $\tilde{Q}^{\pi}$ be the value function under $\pi$ in $\tilde{M}$. Then for all $\pi$
    \begin{align*}
        \norm{Q^{\pi} - \tilde{Q}^{\pi}}_{\infty} \leq \frac{\gamma}{1-\gamma} \norm{ (P - \tilde{P}) V^{\pi} }_{\infty}.
    \end{align*}
\end{lemma}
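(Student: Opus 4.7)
The plan is to expand the Bellman equation for $Q^\pi$ in both MDPs, write the difference $Q^\pi - \tilde Q^\pi$ as a sum of (i) a term involving $(P - \tilde P) V^\pi$ and (ii) a term involving $\tilde P (V^\pi - \tilde V^\pi)$, and then contract the second term using the fact that $\tilde P$ is row-stochastic together with the elementary bound $\|V^\pi - \tilde V^\pi\|_\infty \leq \|Q^\pi - \tilde Q^\pi\|_\infty$. Rearranging the resulting inequality yields the $\gamma/(1-\gamma)$ factor.

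First, I would write $Q^\pi = r + \gamma P V^\pi$ and $\tilde Q^\pi = r + \gamma \tilde P \tilde V^\pi$, subtract, and add and subtract $\gamma \tilde P V^\pi$ to get the decomposition
\begin{align*}
Q^\pi - \tilde Q^\pi \;=\; \gamma\,(P - \tilde P)\,V^\pi \;+\; \gamma\, \tilde P\,(V^\pi - \tilde V^\pi).
\end{align*}
Taking $\|\cdot\|_\infty$ and using the triangle inequality then bounds the left-hand side by $\gamma \|(P-\tilde P)V^\pi\|_\infty + \gamma \|\tilde P (V^\pi - \tilde V^\pi)\|_\infty$.

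Next, I would argue that each row of $\tilde P$ is a probability distribution, so $\tilde P$ acts as an averaging operator and in particular $\|\tilde P x\|_\infty \leq \|x\|_\infty$ for every vector $x$; applied with $x = V^\pi - \tilde V^\pi$ this gives $\|\tilde P (V^\pi - \tilde V^\pi)\|_\infty \leq \|V^\pi - \tilde V^\pi\|_\infty$. Separately, since $V^\pi(s) = \E_{a\sim\pi(\cdot\,|\,s)}[Q^\pi(s,a)]$ and likewise for $\tilde V^\pi$, Jensen's inequality (or direct linearity) yields $\|V^\pi - \tilde V^\pi\|_\infty \leq \|Q^\pi - \tilde Q^\pi\|_\infty$.

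Plugging these two bounds back in gives
\begin{align*}
\|Q^\pi - \tilde Q^\pi\|_\infty \;\leq\; \gamma\,\|(P-\tilde P)V^\pi\|_\infty \;+\; \gamma\,\|Q^\pi - \tilde Q^\pi\|_\infty,
\end{align*}
and rearranging isolates $\|Q^\pi - \tilde Q^\pi\|_\infty \leq \frac{\gamma}{1-\gamma}\|(P-\tilde P)V^\pi\|_\infty$, as required. There is no real obstacle here; the only subtle step is recognising that we should compare both Q-functions against the \emph{same} $V^\pi$ (hence the add-and-subtract $\gamma \tilde P V^\pi$) so that the leading error term is $(P-\tilde P)V^\pi$ rather than $(P-\tilde P)\tilde V^\pi$, matching the statement of the lemma exactly.
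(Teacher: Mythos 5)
Your proof is correct and is the standard argument for the Simulation Lemma; the paper itself does not prove this lemma but cites it (Kearns \& Singh; Agarwal et al.), and your decomposition $Q^\pi - \tilde Q^\pi = \gamma(P-\tilde P)V^\pi + \gamma\tilde P(V^\pi - \tilde V^\pi)$, followed by the row-stochastic contraction of $\tilde P$, the bound $\|V^\pi - \tilde V^\pi\|_\infty \le \|Q^\pi - \tilde Q^\pi\|_\infty$, and the rearrangement using $\gamma < 1$, is exactly the argument in those references. Nothing is missing.
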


\begin{lemma}[Lipschitz preservation under convolution]
    \label{lemma:lipschitz_convolution}
    Suppose $f$ is an $L$-Lipschitz function and $\phi$ is a function such that $\int \phi(x) dx = 1$ and $\phi(x) \geq 0$ for all $x$. Then $g = f \ast \phi$ is also an $L$-Lipschitz function.
\end{lemma}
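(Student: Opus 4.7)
The plan is to apply the definition of convolution directly and exploit the fact that $\phi$, being nonnegative with unit mass, acts as a probability density against which we can take a weighted average. Writing $g(x) = \int f(x - t)\, \phi(t)\, dt$, I would begin with
\begin{align*}
g(x) - g(y) &= \int \bigl( f(x - t) - f(y - t) \bigr)\, \phi(t)\, dt.
\end{align*}
Taking absolute values and using that $\phi(t) \geq 0$ (which is what allows $|\cdot|$ to pass through the integral without touching $\phi$), I would bound
\begin{align*}
|g(x) - g(y)| &\leq \int \bigl| f(x - t) - f(y - t) \bigr|\, \phi(t)\, dt \leq \int L |x - y|\, \phi(t)\, dt = L |x - y|,
\end{align*}
where the second inequality uses the $L$-Lipschitz property of $f$ (applied at the shifted points $x - t$ and $y - t$, which differ by exactly $x - y$) and the final equality uses $\int \phi(t)\, dt = 1$.

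There is essentially no obstacle here: the whole argument is a one-line manipulation once the convolution is written out, with the two hypotheses on $\phi$ playing clearly delineated roles — nonnegativity lets the triangle inequality for integrals be applied against $\phi$ as a weight, and unit mass collapses the final integral. The only item worth flagging is the interpretation of the statement in the paper's context: $f$ is a conditional distribution over states and $\phi$ is the density of the privatization noise, both of which are either discrete or live on $\mathbb{R}^K$, so I would note that the same argument works verbatim with sums replacing integrals in the discrete case, and with any norm $\|x-y\|$ replacing $|x-y|$ in the multivariate case since the Lipschitz bound is applied pointwise inside the integrand before $\phi$ is integrated out.
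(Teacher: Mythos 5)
Your proof is correct and follows essentially the same argument as the paper's: write the difference $g(x)-g(y)$ as an integral of differences of shifted values of $f$, pull the norm inside the integral using nonnegativity of $\phi$, apply the Lipschitz bound pointwise, and integrate out $\phi$ using its unit mass. The only cosmetic difference is the convolution convention ($f(x-t)$ versus the paper's $f(z+x)$), which does not affect the argument, and your closing remarks about the discrete and multivariate cases are accurate and consistent with how the lemma is actually used in the paper.
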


\begin{proof}
    \begin{align*}
        \norm{g(z) - g(z')} &\leq \norm{ \int (f(z+x) - f(z'+x))\phi(x)dx }\\
        &\leq \int \norm{ f(z+x) - f(z'+x) } \phi(x) dx\\
        &\leq L \norm{z - z'} \int \phi(x) dx\\
        &= L \norm{z - z'}~.
    \end{align*}
\end{proof}

\subsection{Proof of Theorem \ref{thm:Q_approx_error}}

\begin{T1}
    Let $M$ be the MDP environment and $\tilde{M}$ denote the privatised MDP under an $\epsilon$-differentially private projected laplace mechanism. Let $Q^*$ be the optimal value function in $M$ and $\tilde{Q}^{*}$ be the optimal value function in $\tilde{M}$. Then,
    \begin{align*}
        \norm{Q^* - \tilde{Q}^*}_{\infty} &\leq \bigO \left( \sqrt{K}\exp\left( - \frac{\epsilon}{2\sqrt{2K}} \right) + K \exp\left( - \sqrt{N} \epsilon \right) + \frac{K^\frac{3}{2}}{\sqrt{N}} \right).
    \end{align*}
\end{T1}

\begin{proof}
    We have
    \begin{align}
        \abs{ Q^{*}(s, a) - \tilde{Q}^*(s, a) } &= \abs{ \sup_{\pi} Q^{\pi}(s, a) - \sup_{\pi} \tilde{Q}^{\pi}(s, a) } \nonumber \\
        &\leq \sup_{\pi} \abs{ Q^{\pi}(s, a) - \tilde{Q}^{\pi}(s, a) } \nonumber \\
        &\leq \sup_{\pi} \norm{Q^{\pi} - \tilde{Q}^{\pi}}_\infty. \label{eqn:1}
    \end{align}
    Then (\ref{eqn:1}) can be bound with Lemma \ref{lemma:simulation_lemma}. For any policy $\bar{\pi}$ and any behaviour policy $\pi$ satisfying Assumption~\ref{ass:non_zero_policy},
    \begin{align}
        \norm{Q^{\bar{\pi}} - \tilde{Q}^{\bar{\pi}}}_{\infty} &\leq
        \frac{\gamma}{1-\gamma} \norm{(P - \tilde{P}^{\pi}) V^{\bar{\pi}}}_{\infty} \nonumber \\
        &\leq \frac{\gamma}{1-\gamma} \max_{s, a} \norm{P_{sa} - \tilde{P}^{\pi}_{sa}}_{1} \norm{V^{\bar{\pi}}}_{\infty} \nonumber \\
        &\leq \frac{\gamma r_{\max}}{(1-\gamma)^2} \max_{s, a} \norm{P_{sa} - \tilde{P}^{\pi}_{sa}}_{1}. \label{eqn:Q_sim_lemma}
    \end{align}
    In the above, $\tilde{P}^{\pi}$ is as defined in (\ref{eqn:P_tilde_pi}) and denotes the transition model in $\tilde{M}$ under behaviour policy $\pi$. 
    Writing $\bar{P}(s' \,|\,s_1, a) = \sum\limits_{s_2 \in \St} P(s_2 \,|\,s_1, a) P_{\Mech}(s' \,|\,s_2)$, we have, for any $s,a$,
    \begin{align*}
        & \norm{P_{sa} - \tilde{P}^{\pi}_{sa}}_1 = \norm{P_{sa} - \sum_{s_1 \in \St} \nu_{\pi}(s_1 \,|\,s, a) \bar{P}_{s_1 a}}_1\\
        &= \norm{\sum_{s_1 \in \St} \nu_{\pi}(s_1 \,|\,s, a) P_{sa} - \sum_{s_1 \in \St} \nu_{\pi}(s_1 \,|\,s, a) \bar{P}_{s_1 a}}_1 \\
        &\leq \sum_{s_1 \in \St} \nu_{\pi}(s_1 \,|\,s, a) \norm{P_{sa} - \bar{P}_{s_1 a}}_1 \\
        &\leq  
        \sum_{s_1 \in \St} \nu_{\pi}(s_1 \,|\,s, a) \left( \underbrace{\norm{P_{sa} - \bar{P}_{sa}}_1}_{(one)} + \underbrace{\norm{\bar{P}_{sa} - \bar{P}_{s_1 a}}_1}_{(two)} \right),
    \end{align*}
    where the last two steps follows from the triangle inequality and splitting $\norm{P_{sa} - \bar{P}_{s_1 a}}_1$.
    The first term can be viewed as the error due to output privatisation and the second term can be viewed as the error due to input privatisation. We will look to bound the two terms separately.

    \textbf{Term (one)}\\
    By the Bretagnolle-Huber inequality \cite{bretagnolle1979estimation, canonne2023short}, term (one) can be bound as:
    \begin{align*}
        \norm{P_{sa} - \bar{P}_{sa}}_1 \leq 2 \sqrt{ 1 - \exp(-D_{\mathit{KL}}(P_{sa} \,||\, \bar{P}_{sa}) }.
    \end{align*}
    The KL divergence term can be lower bound as follows:
    \begin{align}
        - D_{\mathit{KL}}(P_{sa} \,||\, \bar{P}_{sa}) &= \sum_{s' \in \St} P_{sa}^{s'} \log \frac{\bar{P}_{sa}^{s'}}{P_{sa}^{s'}} \nonumber \\
        &= \sum_{s' \in \St} P_{sa}^{s'} \log \frac{\sum_{s_2 \in \St} P_{\Mech}(s' \,|\,s_2) P_{sa}^{s_2} }{P_{sa}^{s'}} \nonumber \\
        & \geq \sum_{s' \in \St} P_{sa}^{s'} \log \frac{ P_{\Mech}(s' \,|\,s') P_{sa}^{s'} }{ P_{sa}^{s'} } \label{eqn:3}\\
        &= \sum_{s' \in \St} P_{sa}^{s'} \log P_{\Mech}(s' \,|\,s') \nonumber \\
        &\geq \min_{s' \in \St} \log P_{\Mech}(s' \,|\,s'). \nonumber
    \end{align}
    Here (\ref{eqn:3}) follows as we shrink the sum to just when $s_2 = s'$.    
    
    Note that $P_{\Mech}(s' \,|\,s')$ is smallest when $s'$ is an interior point as it has the smallest subspace of $\R^{K}$ that projects back onto itself. So let $s' \in \St \setminus \partial \St$ denote an interior point going forward. Thus term (one) can be bound as:
    \begin{align}
        \norm{P_{sa} - \bar{P}_{sa}}_1 &\leq 2 \sqrt{ 1 -  P_{\Mech}(s' \,|\,s') }. \label{eqn:t1_BH_bound}
    \end{align}

We now look to lower bound $P_{\Mech}(s' \,|\,s')$. Letting $\mathcal{R}(s')$ denote the subspace that projects onto $s'$, $P_{\Mech}(s' \,|\,s')$ can be expressed as
\begin{align*}
    P_{\Mech}(s' \,|\,s') &= P_{\Mech_L}(s \in \mathcal{R}(s') \,|\,s')\\
    &=  \int_{s \in \mathcal{R}(s')} \left( \frac{N \epsilon}{4} \right)^{K} \exp \left( - \frac{N \epsilon}{2} \norm{s - s'}_1 \right) ds .
\end{align*}

Let $\delta = \frac{1}{\sqrt{2}N}$ and $\mathcal{B}^2_{\delta}(s') \coloneqq \{s \in \R^{K} : \norm{s - s'}_2 \leq \delta \}$ denote the $\ell_2$ ball of radius $\delta$. Clearly, all points inside $\mathcal{B}^2_{\delta}(s')$ are closest to $s'$ compared to any other point in the state space. Also, we have that $\mathcal{B}^2_{\delta}(s') \subseteq \mathcal{R}(s')$.
Integrating, we get:
\begin{align*}
    P_{\Mech_L}(s \in \mathcal{R}(s') \,|\, s') &= \int_{s \in \mathcal{R}(s')} \left( \frac{N \epsilon}{4} \right)^{K} \exp \left( - \frac{N \epsilon}{2} \norm{s - s'}_1 \right) ds \\
    &\geq \int_{s \in \mathcal{B}_{\delta}^2(s')} \prod_{i = 1}^{K} \left( \frac{N \epsilon}{4} \right) \exp \left( - \frac{N \epsilon}{2} \abs{s_i - s'_i} \right) ds
\end{align*}

Let $\delta_K = \delta/\sqrt{K}$. We now integrate over $C_{\delta_K}(x') \coloneqq \bigtimes_{i=1}^{K} [s'_i - \delta_K, s'_i + \delta_K]$ to get a lower bound as $C_{\delta_K}(x') \subseteq \mathcal{B}_{\delta}^2(x')$. The integral is then:
{\allowdisplaybreaks
\begin{align}
    P_{\Mech_L}(s \in \mathcal{R}(s') \,|\,s') &\geq \int_{s \in \mathcal{B}_{\delta}^2(s')} \prod_{i = 1}^{K} \left( \frac{N \epsilon}{4} \right) \exp \left( - \frac{N \epsilon}{2} \abs{s_i - s'_i} \right) ds  \nonumber \\
    &\geq \int_{s \in C_{\delta_K}(s')} \prod_{i = 1}^{K} \left( \frac{N \epsilon}{4} \right) \exp \left( - \frac{N \epsilon}{2} \abs{s_i - s'_i} \right) ds \nonumber \\
    &= \prod_{i = 1}^{K} \int_{s'_i - \delta_K}^{s'_i + \delta_K} \frac{N \epsilon}{4} \exp \left( - \frac{N \epsilon}{2} \abs{s_i - s'_i} \right) ds_i \nonumber \\
    &= \prod_{i = 1}^{K} \int_{- \delta_K}^{\delta_K} \frac{N \epsilon}{4} \exp \left( - \frac{N \epsilon}{2} \abs{s_i} \right) ds_i \nonumber \\
    &= \prod_{i=1}^{K} \left( 1 - \exp(-\alpha \delta_K) \right) \label{eq:p(s'|s') step 4} \\
    &= \left(1 - \exp\left(- \frac{\epsilon}{2 \sqrt{2K}} \right) \right)^{K} \nonumber \\ 
    &\geq 1 - K \exp\left(- \frac{\epsilon}{2 \sqrt{2K}} \right), \label{eq:p(s'|s') step 6}
\end{align}
where step (\ref{eq:p(s'|s') step 4}) follows by noting that for, all $\alpha > 0$
\begin{align*}
    \frac{\alpha}{2} \int_{-\delta_K}^{\delta_K} \exp \left( - \alpha \abs{x} \right) dx = \alpha \left( \int_0^{\delta_K} \exp \left( - \alpha x \right) dx \right) 
    = \alpha \left[ \frac{1 - \exp(-\alpha \delta_K)}{\alpha} \right] 
    = 1 - \exp(-\alpha \delta_K)
\end{align*}
}
and setting $\alpha = N\epsilon / 2$,
and step (\ref{eq:p(s'|s') step 6}) follows by noting that for all $p \in [0,1]$, $(1-p)^k \geq 1 - kp$.

%

Substituting into equation (\ref{eqn:t1_BH_bound}) gives the following bound on term (one):
\begin{align}
    \sum_{s_1 \in \St} \nu_{\pi}(s_1 \,|\,s, a) \norm{P_{sa} - \bar{P}_{sa}}_1 &\leq 2 \sqrt{1 - P_{\Mech}(s' \,|\,s')} \nonumber \\
    &= 2 \sqrt{1 - \left( 1 - K\exp\left(- \frac{\epsilon}{2 \sqrt{2K}} \right) \right)} \nonumber \\
    &= 2 \sqrt{K} \exp\left(- \frac{\epsilon}{2 \sqrt{2K}} \right). \label{eqn:term one bound}
\end{align}

\textbf{Term (two)}\\
Using equation \ref{eqn:bayes_state_dist}, we have:
\begin{align*}
    \sum_{s_1 \in \St} \nu_{\pi}(s_1 \,|\,s, a) \norm{\bar{P}_{sa} - \bar{P}_{s_1 a}}_1 &= \sum_{s_1 \in \St} \frac{P_{\Mech}(s \,|\,s_1) \nu_{\pi}(s_1 \,|\,a)}{\tilde{\nu}_{\pi}(s \,|\,a)} \norm{\bar{P}_{sa} - \bar{P}_{s_1 a}}_1\\
    &\leq H \sum_{s_1 \in \St} P_{\Mech}(s \,|\,s_1) \norm{\bar{P}_{sa} - \bar{P}_{s_1 a}}_1\\
    &\leq H \sum_{s_1 \in \St} \frac{P_{\Mech}(s \,|\,s_1)}{P_{\Mech}(s_1 \,|\,s)} P_{\Mech}(s_1 \,|\,s) \norm{\bar{P}_{sa} - \bar{P}_{s_1 a}}_1\\
    &\leq H C_{\max} \sum_{s_1 \in \St} P_{\Mech}(s_1 \,|\,s) \norm{\bar{P}_{sa} - \bar{P}_{s_1 a}}_1,
\end{align*}

where $H = \max_{s, s_1} \frac{\nu_{\pi}(s_1 \,|\,a)}{\tilde{\nu}_{\pi}(s \,|\,a)}$, and $C_{\max} = \max_{s, s_1} \frac{P_{\Mech}(s \,|\,s_1)}{P_{\Mech}(s_1 \,|\,s)}$. 

We now look to bound $\sum_{s_1 \in \St} P_{\Mech}(s_1 \,|\,s) \norm{\bar{P}_{sa} - \bar{P}_{s_1 a}}_1$. 
For $\alpha > 0$, define the $\ell_\infty$ ball of radius $\beta = \alpha + \frac{1}{\sqrt{2}N}$ around a state $s$ and its complement as:
\begin{align*}
    B_{\alpha}(s) &\coloneqq \{ s' \in \St: \norm{s - s'}_\infty < \beta \}\\
    B_{\alpha}^{c}(s) &\coloneqq \{ s' \in \St: \norm{s - s'}_\infty \geq \beta \}.
\end{align*}
Splitting $\sum_{s_1 \in \St} P_{\Mech}(s_1 \,|\,s) \norm{\bar{P}_{sa} - \bar{P}_{s_1 a}}_1 $ over $B_\beta(s)$ and $B_{\beta}^{c}(s)$ gives us the following bound:
\begin{align}
    \sum_{s_1 \in \St} & P_{\Mech}(s_1 \,|\,s) \norm{\bar{P}_{sa} - \bar{P}_{s_1 a}}_1 \nonumber \\
    &=
    \sum_{s_1 \in B_{\beta}(s)} P_{\Mech}(s_1 \,|\,s) \norm{\bar{P}_{sa} - \bar{P}_{s_1 a}}_1 + \sum_{s_1 \in B^{c}_{\beta}(s)} P_{\Mech}(s_1 \,|\,s) \norm{\bar{P}_{sa} - \bar{P}_{s_1 a}}_1 \nonumber \\
    &\leq L K \beta + 2K \exp\left( - \frac{N \alpha \epsilon}{2 \sqrt{K}} \right) \label{eqn:term two step 3} \\
    &= 2K \exp\left( - \frac{N \alpha \epsilon}{2 \sqrt{K}} \right) + LK \alpha + \frac{L K}{\sqrt{2}{N}}.
\end{align}
The first term in inequality (\ref{eqn:term two step 3}) follows by noting that $\bar{P}$ is $L$-Lipschitz by Lemma \ref{lemma:lipschitz_convolution} and Assumption \ref{ass:lipschitz_dynamics}, and that, for all $x \in \R^{K}$, $\norm{x}_1 \leq K \norm{x}_\infty$. 
The second term in inequality (\ref{eqn:term two step 3}) follows by noting that the $L_1$ norm between distributions is bound by 2 and applying Proposition \ref{thm:proj_laplace_util}. Picking $\alpha = \frac{2\sqrt{K}}{\sqrt{N}}$ gives us
\begin{align*}
    \sum_{s_1 \in \St} P_{\Mech}(s_1 \,|\,s) \norm{\bar{P}_{sa} - \bar{P}_{s_1 a}}_1
    \leq \bigO\left( K \exp \left( - \sqrt{N} \epsilon \right) + \frac{K^{3/2}}{\sqrt{N}} \right).
\end{align*}

Thus the final bound on term (two) is:
\begin{align}
    \sum_{s_1 \in \St} \nu_{\pi}(s_1 \,|\,s, a) \norm{\bar{P}_{sa} - \bar{P}_{s_1 a}}_1 \leq \bigO\left( K \exp \left( - \sqrt{N} \epsilon \right) + \frac{K^{3/2}}{\sqrt{N}} \right). \label{eqn:term two bound}
\end{align}

Combining (\ref{eqn:term one bound}) and (\ref{eqn:term two bound}) and picking the higher growth rate terms gives the final result.
\end{proof}

\section{Implementation Details}
\label{appendix:DP_DQN}

Our experimental results use a concrete instantiation of Algorithm \ref{alg:DPRL} with DQN \cite{mnih2015DQN} and epsilon-greedy for exploration. We display the full algorithm details in Algorithm \ref{alg:DP_Q}. 

The state privatisation mechanism used is the projected laplace mechanism. We provide a particular instantiation of Algorithm \ref{alg:projected_Laplace} in Algorithm \ref{alg:l2_proj_laplace}. Algorithm \ref{alg:l2_proj_laplace} works by first performing a projection onto the simplex using the sorting method from \citet{shalev2006efficient} before finding the nearest point in the state space. The sorting method has complexity $\bigO(K \log K)$ and finding the nearest point in the state space can be done in $\bigO(K)$.

Algorithm \ref{alg:DP_Q} has a few minor differences to Algorithm \ref{alg:DPRL}. 
Firstly, the algorithm takes the target cumulative privacy parameters as input instead of the per-step privacy parameters. The per-step privacy budget is calculated in line \ref{alg:DP_DQN_set_eps}. 
Lines 10-21 of can be considered the RL algorithm in Algorithm \ref{alg:DPRL} expanded. 
The replay buffer and target network are written such that they are not a part of the RL algorithm itself. This is done to make clear that the buffer and target network maintain state across all iterations. Finally, DQN returns a value function instead of a policy. This is a minor change however as the policy used is simply the epsilon-greedy policy with respect to the returned value function. 

The parameters used in each experiment are listed in Table \ref{tab:DQN_params}. The neural network used in all experiments was a 6 layer, fully connected MLP. The learning rate ($\alpha$) is not listed as we use the default settings of the RMSProp optimizer in PyTorch to optimize the neural network.

\begin{algorithm}[ht]
\caption{Differentially Private DQN (DP-DQN)}\label{alg:DP_Q}

\begin{algorithmic}[1]
    \STATE \textbf{Input:} Environment $M = (\St, \A, r, P, \gamma)$, initial state $s_0 \in \St$, privacy mechanism $\Mech: \St \times \R \to \St$. 
    \STATE \textbf{Parameters:} cumulative privacy parameters $(\epsilon, \delta)$, time horizon $T$, batch size $B$, target update step $D$, population size $N$, learning rate $\alpha$, initial exploration rate $\epsilon_{start} < 1$, decay rate $\kappa < 1$.
    \STATE \textbf{Initialize:} network parameters $\theta$ randomly, target network parameters $\bar{\theta} = \theta$, $\epsilon_{explore} = \epsilon_{start}$.
    \STATE $\texttt{Buffer} \leftarrow \left\{ \right\}$.
    \STATE $\epsilon' = \frac{\epsilon}{2 \sqrt{2T \log(1/\delta)}}$. \label{alg:DP_DQN_set_eps}
    \STATE $\tilde{s}_0 = \Mech_{\epsilon'}(s_0)$. 
    \FOR{$t = 0, 1, 2, \ldots, T$}
        \STATE $p \sim \mathit{Uniform}([0,1])$.
        \STATE \textbf{if} $p > \epsilon_{explore}$ \textbf{then} $\tilde{a}_t = \arg\max_{a}Q_{\theta}(\tilde{s}_t, a)$  \textbf{else} $\tilde{a}_t \sim \mathit{Uniform}(\A)$.
        \STATE Receive $s_{t+1} \sim P(\cdot \,|\,s_t, \tilde{a}_t)$.
        \STATE $\tilde{s}_{t+1} = \Mech_{\epsilon'}(s_{t+1})$.
        \STATE $\tilde{r}_{t} = r(\tilde{s}_t, a_t)$, 
        \STATE Append $(\tilde{s}_t, \tilde{a}_t, \tilde{r}_t, \tilde{s}_{t+1})$ to $\texttt{Buffer}$.
        \IF{$t > B$}
            \FOR{$i = 1, \ldots, B$}
                \STATE $(s, a, r, s') \sim \mathit{Uniform}(\texttt{Buffer})$.
                \STATE $y_i \leftarrow r + \gamma \max_a Q_{\bar{\theta}}(s', a)$.
                \STATE $\ell_i \leftarrow \frac{1}{2} \left( Q_{\theta}(s, a) - y_i \right)^2$.
            \ENDFOR
            \STATE Run one step SGD $\theta \leftarrow \theta + \alpha\frac{1}{B} \nabla_\theta \sum_{j=1}^{B} \ell_j$.
        \ENDIF
        \IF{$t \mod D = 0$}
            \STATE Update target network parameters $\bar{\theta} \leftarrow \theta$.
        \ENDIF
        \STATE $\epsilon_{explore} \leftarrow 0.03 + (\epsilon_{start} - 0.03) \cdot e^{-\kappa t}$.  
    \ENDFOR
\end{algorithmic}
\end{algorithm}

\begin{algorithm}[ht]
\caption{$\ell_2$-projected laplace mechanism}
    \label{alg:l2_proj_laplace}
    \begin{algorithmic}[1]
        \STATE \textbf{Input:} state $s \in \St \subseteq \R^{K}$.
        \STATE \textbf{Parameters:} Privacy parameter $\epsilon$, population size $N$.
        
        $ $
        \STATE $s' = s + \eta$, where $\eta \in \R^{K}$ and for $i \in [K],~\eta_i \sim \text{Lap}\left( \frac{2}{N\epsilon} \right)$.
        \STATE Sort $s' = (s'_1, \ldots, s'_K)$ in descending order: $s'_{(1)} \leq \ldots \leq s'_{(n)}$.
        \STATE Define $\pi(m) = \frac{1}{m} \left( \sum_{r=1}^{m} s'_{(r)} - 1 \right)$
        \STATE Compute $\rho = \max \left\{ m \in [K]: s'_{(m)} - \pi(m) > 0 \right\}$
        \STATE Compute $\bar{s}$ where $\bar{s}_i = \max (0, s'_i - \pi(\rho) )$
        \STATE Compute $e \in \R^{K}$ where $e_i = x_i - \bar{s}_i$ and $x_i = \arg\min_{x \in S_1} \abs{x - \bar{s}_i}$ for $S_1 = \left\{0, \frac{1}{N}, \ldots, \frac{N-1}{N}, 1 \right\}$.
        \STATE Find $J \subset [K]$ with $\abs{J} = K-1$ that minimizes $\sqrt{\sum_{i \in J} e^2_i}$. Let $j = [K] \setminus J$ and set $e_j = -\sum_{i \in J} e_i$. 
        \RETURN $\bar{s} + e$. 
    \end{algorithmic}
\end{algorithm}

\begin{center}
    \begin{table}[H]
        \caption{DP-DQN configuration for each experiment.}
        \label{tab:DQN_params}
        
        \centering
        \begin{tabular}{cccc}
             Parameter & 82K & 196K & 1.1M\\
             \hline
             $\gamma$ & 0.999 & 0.999 & 0.999\\
             $T$ & 5e5 & 5e5 & 5e5\\
             $B$ & 128 & 128 & 128\\
             $D$ & 800 & 800 & 800\\
             $\epsilon_{start}$ & 0.9999 & 0.9999 & 0.9999\\
             $\kappa$ & $10^{-5}$ & $10^{-5}$ & $10^{-5}$
        \end{tabular}
    \end{table}
\end{center}

\end{document}